\theoremstyle{definition}
\newtheorem{definition}{Definition}
\theoremstyle{definition}
\newtheorem{notation}{Notation}
\theoremstyle{plain}
\newtheorem{theorem}{Theorem}
\theoremstyle{remark}
\newtheorem{remark}{Remark}
\theoremstyle{plain}
\theoremstyle{definition}
\newtheorem{fact}{Fact}
\theoremstyle{plain}
\newtheorem{proposition}{Proposition}
\theoremstyle{plain}
\newtheorem{lemma}{Lemma}
\theoremstyle{definition}
\newtheorem{example}{Example}
\tikzset{
modal/.style={>=stealth',shorten >=1pt,shorten <=1pt,auto,node distance=1.5cm,
semithick},
world/.style={circle,draw,minimum size=0.5cm},
arg/.style={circle,draw,minimum size=0.5cm},
sarg/.style={draw},
carg/.style={draw,minimum size=0.5cm},
unarg/.style={circle,draw,minimum size=0.5cm,dashed},
fixarg/.style={circle,draw,minimum size=0.5cm},
point/.style={circle,draw,inner sep=0.5mm,fill=black},
reflexive above/.style={->,loop,looseness=7,in=120,out=60},
reflexive below/.style={->,loop,looseness=7,in=240,out=300},
reflexive left/.style={->,loop,looseness=7,in=150,out=210},
reflexive right/.style={->,loop,looseness=5,in=30,out=330},
coil/.style={decorate, decoration={coil,amplitude=4pt,segment length=5pt}},
snake/.style={decorate, decoration={snake}},
zigzag/.style={decorate, decoration={zigzag}}
}
\newcommand{\ang}[1]{\langle #1 \rangle}
\newcommand{\tuple}[1]{\langle #1 \rangle}
\newcommand{\uni}{\mathcal{U}}
\newcommand{\args}{\mathcal{A}}
\newcommand{\rel}{\mathcal{D}}
\newcommand{\af}{\ang{\args,\rel}}		
\newcommand{\imparg}{\ang{\args^{\fix},\args^{?},\defrel,\Delta}}
\newcommand{\lexpressive}{\preccurlyeq}
\newcommand{\compfun}{\mathsf{com}}
\newcommand{\niaf}{\mathsf{IAF}}
\newcommand{\nargiaf}{\mathsf{arg}\text{-}\mathsf{IAF}}
\newcommand{\nimpargiaf}{\mathsf{imp}\text{-}\mathsf{arg}\text{-}\mathsf{IAF}}
\newcommand{\comp}{\ang{\args^{\ast},\defrel^{\ast}}}
\newcommand{\naf}{\mathsf{AF}}
\newcommand{\impor}{\mathsf{IMPLY}^{\lor}}
\newcommand{\opimp}{\mathsf{IMPLY}}
\newcommand{\opor}{\mathsf{OR}}
\newcommand{\opnand}{\mathsf{NAND}}
\newcommand{\ndepargiaf}{\mathsf{dep}\text{-}\mathsf{arg}\text{-}\mathsf{IAF}}
\newcommand{\fix}{F}
\newcommand{\argsf}{\args^{\fix}}
\newcommand{\black}{\color{black}}
\newcommand{\lanset}{\mathcal{L}}
\newcommand{\rulset}{\mathsf{R}}
\newcommand{\namefun}{\mathfrak{n}}
\newcommand{\defrel}{\mathcal{D}}
\newcommand{\negfun}{\overline{\cdot}}
\newcommand{\at}{\mathsf{AT}}
\newcommand{\unravelat}{\ang{\lanset, \negfun, \rulset, \namefun,\kb}}
\newcommand{\kb}{\mathcal{K}}
\newcommand{\sub}{\mathsf{Sub}}
\newcommand{\prem}{\mathsf{Prem}}
\newcommand{\conc}{\mathsf{Conc}}
\newcommand{\ftoprule}{\mathsf{TopRule}}
\newcommand{\sto}{\!\twoheadrightarrow\!}
\newcommand{\rulprefix}{\mathsf{rul}\text{-}}
\newcommand{\premprefix}{\mathsf{prem}\text{-}}
\newcommand{\argprefix}{\mathsf{arg}\text{-}}
\newcommand{\nsaf}{\mathsf{SAF}}
\newcommand{\daf}{\mathsf{AF}}
\newcommand{\nisaf}{\mathsf{ISAF}}
\newcommand{\unravelsaf}{\ang{\lanset, \negfun, \rulset, \namefun, \kb, \preceq}}
\newcommand{\indexedsaf}[1]{\ang{\lanset_{#1}, \negfun_{#1}, \rulset_{#1}, \namefun_{#1}, \kb_{#1}, \preceq_{#1}}}
\newcommand{\unravelsafprime}{\ang{\lanset, \negfun, \rulset^{\prime}, \namefun, \kb, \preceq^{\prime}}}
\newcommand{\contfun}{\overline{\cdot}}
\newcommand{\restrict}{\upharpoonright}
\begin{document}

\title{Comparative Expressivity for Structured Argumentation Frameworks with Uncertain Rules and Premises}

\author{Carlo Proietti \\ carlo.proietti@ilc.cnr.it \\
       Consiglio Nazionale delle Ricerche, Italy
       \\ \\ 
       Antonio Yuste-Ginel \\ ayusteginel@uma.es \\ 
      Universidad de M\'alaga, Spain}


\maketitle

\begin{abstract}
Modelling qualitative uncertainty in formal argumentation is essential both for practical applications and theoretical understanding. Yet, most of the existing works focus on \textit{abstract} models for arguing with uncertainty. Following a recent trend in the literature, we tackle the open question of studying plausible instantiations of these abstract models. To do so, we ground the uncertainty of arguments in their components, structured within rules and premises. Our main technical contributions are: i) the introduction of a notion of expressivity that can handle abstract and structured formalisms, and ii) the presentation of both negative and positive expressivity results, comparing the expressivity of abstract and structured models of argumentation with uncertainty. These results affect incomplete abstract argumentation frameworks, and their extension with dependencies, on the abstract side, and ASPIC$^+$, on the structured side.
\end{abstract}

\sloppy
\section{Introduction}
\label{sec:intro}

Uncertainty plays a major role in the construction and use of arguments. Considering different sets of background assumptions as holding, or different inference rules as applicable, determines which conclusions are allowed. In adversarial settings such as debates, different levels of awareness of what facts an opponent (or an audience) takes for granted -- as well as varying levels of recognition of its inferential skills --
influence which arguments are put forward.
In computational argumentation, there are two modelling paradigms, with different capacities to represent uncertainty. The first one is abstract argumentation \cite{dung1995acceptability}, whose formal basis are argumentation frameworks, i.e. directed graphs representing arguments (the nodes) and their
defeats (the arrows), and where uncertainty can be about these two entities. The other one is structured argumentation, instantiated by various formalisms such as ASPIC$^{+}$ \cite{modgil2014tutorial}, assumption-based argumentation (ABA) \cite{toni2014tutorial}, deductive argumentation \cite{besnard2001logic,besnard2014constructing} and defeasible logic programming (DeLP) \cite{garcia2014defeasible}. Here the fundamental entities are argument components, such as as premises and the inferential rules that allow deriving conclusions, and uncertainty may concern any of them.

\par
The general question is how uncertainty in abstract and structured argumentation compares. Our main aim here is to define a systematic approach to enable such a comparison and map the relative expressivity of different formalisms for handling uncertainty. Establishing general expressivity results is a way to address many issues about the use of abstract argumentation in structured contexts. In fact, the most relevant concerns are that the information contained in structured frameworks is often lost when moving to their abstract counterpart, insofar as many argumentation frameworks are not plausible instantiations of structured ones \cite{prakken2018abstraction}.\footnote{As shown by Prakken and De Winter \cite{prakken2018abstraction}, this holds for many specifications of abstract argumentation, such as preference-based and value-based, gradual, bipolar, and probabilistic argumentation, motivated a new branch in the formal argumentation literature that consists of revisiting abstract frameworks to find the structured counterpart \cite{cohen2018characterization,rapberger2023,prakken2024dynamics}.} 
\par
Our method involves two steps. First, we set a standard framework for uncertainty in abstract argumentation.
The one we employ here is \emph{argument-incomplete argumentation frameworks with dependencies}, dep-arg-IAFs \cite{fazzingaijcai21}. In this context, uncertainty is encoded by the set of arguments that are left undecided, and expressivity is measured by the number of possible \textit{completions}, i.e. the ways uncertainty can be settled. The second step consists in ``lifting" structured formalisms to the abstract level. This means endowing structured formalisms for uncertainty with an abstract counterpart and then comparing them with different subclasses of dep-arg-IAFs in terms of their \textit{relative expressivity}, a notion we define in Section \ref{sec:def-expressivity}.

\par

Our main results on relative expressivity are provided in Section \ref{sec:aspic-with-uncertainty} and summarised in Figure \ref{fig:arg-ISAFs-results}. Essentially, they show that systems for structured argumentation with uncertainty are strictly more expressive than general argument-incomplete argumentation frameworks without dependencies, but strictly less than frameworks with dependencies. They also show that, restricting to structured argumentation, rule-incomplete systems are strictly more expressive than premise-incomplete ones.
Our results are relative to the structured argumentation formalism ASPIC$^{+}$.

\par 
We proceed as follows. 
In Section \ref{sec:background}, we provide the basics of abstract argumentation and its extension to incomplete argumentation frameworks with argument incompleteness. Further, we introduce the key notion of \textit{relative expressivity} among classes of incomplete AFs, which is key to comparing structured and abstract formalisms. In Section \ref{sec:aspic-with-uncertainty}, we introduce ASPIC$^{+}$ as our reference framework for structured argumentation and formulate different types of premise-incompleteness and rule-incompleteness. We then proceed to compare the expressivity of different types of IAFs with incomplete ASPIC$^{+}$ frameworks. 
Section \ref{sec:conclusion} concludes this research note.

\section{Abstract Argumentation with Qualitative Uncertainty}\label{sec:background}

In what follows, we assume as given a \textbf{background set of arguments} (names) $\uni=\{a_1,...,a_n,...\}$. \par

\begin{definition}[\cite{dung1995acceptability}]
An \textbf{abstract argumentation framework} (AF) is a directed graph $\naf=\ang{\args,\defrel}$ where $\args$ is a set of \emph{arguments} ($\args \subseteq \uni$) and $\defrel\subseteq \args\times \args$ is a \textit{defeat relation} among them. 
\end{definition}

Intuitively, defeats are meant to encode all possible conflicts between two arguments, such as negating the other argument's premises, negating its conclusion, its inferential soundness, etc. \par

\begin{notation}
    We use `AFs' to denote the abbreviation of `abstract argumentation frameworks' as well as the class of all abstract argumentation frameworks. We extend this harmless abuse of notation to the formalisms defined throughout the paper: `Xs' denote the plural of the abbreviation X and the class of all Xs. 
\end{notation}
The key concept in abstract argumentation is that of \textbf{argumentation semantics}. A semantics $\sigma$ is a map $\af \mapsto \wp(\args)$, where $\sigma(\af)$ provides all sets $E$ of arguments which constitute a ``justified position" within $\af$, in other words, a set of jointly acceptable arguments (also called an \textit{extension} or a \textit{solution}). To capture the informal notion of acceptability, extensions are asked to satisfy several specific constraints. As an example, consider \textbf{admissible extensions}, which are asked to satisfy conflict-freeness (i.e., $E \in \mathsf{admissible}(\af)$ implies that no $x,y \in E$ are such that $x\defrel y$) and self-defence (i.e., $E \in \mathsf{admissible}(\af)$ implies that for every $x \in \args$ such that $x\defrel y$ for some $y \in E$, there is a $z \in E$ such that $z \defrel x$). Here, we won't provide further details about argumentation semantics, since they are somehow orthogonal to our topic of interest in the present work. Nonetheless, we need to point out that semantic considerations lie behind some design choices of our approach, such as the definition of relative expressivity. {For a systematic presentation of abstract argumentation semantics and their insights, the reader is referred to \cite{baroni2018abstract}.}

\subsection{Abstract Argumentation Frameworks with Uncertain Arguments}\label{subsec:arg-IAFs}

In order to deal with uncertain arguments we need to refine our definition of AFs by specifying two distinct sets of arguments. The definition runs as follows.

\begin{definition}[\cite{arg-IAFs-original}]
An \textbf{argument-incomplete abstract argumentation framework} (arg-IAF) is a tuple $\nargiaf=\ang{\args^{\fix},\args^{?},\defrel}$ where $\args^{\fix}$ and $\args^{?}$ are two pairwise disjoint sets of arguments and $\defrel\subseteq (\args^{\fix}\cup \args^{?})\times (\args^{\fix}\cup \args^{?})$.
\end{definition}

Compared to the standard notion of an AF, arguments are partitioned into two disjoint sets: the set $\args^{\fix}$ of fixed or certain arguments, i.e. arguments that are ``always there" and the set $\args^{?}$ is that of uncertain arguments. The role played by this distinction becomes clear with the definition of completion.

\begin{definition}
A \textbf{completion} of $\nargiaf=\ang{\args^{\fix},\args^{?},\defrel}$ is any AF $\ang{\args^{\ast},\defrel^{\ast}}$ s.t.:
\begin{itemize}
\item $\args^{\fix}\subseteq \args^{\ast}\subseteq \args^{\fix}\cup \args^{?}$.
\item $\defrel^{\ast}=\defrel_{\restrict  \args^{\ast}}$.\footnote{Given a relation $R\subseteq X\times X$ and a set $Y\subseteq X$, we use $R_{\restrict Y}$ as an abreviation of $R \cap (Y \times Y)$.}
\end{itemize}
We denote by $\compfun(\nargiaf)$ the set of completions of $\nargiaf$. We adopt the same convention for the other formalisms that include completions in their definition.
\end{definition}

A completion thus represents a possible way uncertain arguments can be added to fixed ones to form an AF. It is important to note that defeats in an arg-IAF are instead all ``certain", and the second item of the definition guarantees that they are always present in a completion once their endpoints are there.

\begin{example}\label{ex:arg-IAF} The following figure depicts an arg-IAF, $\nargiaf=\ang{\args^{\fix}_0,\args^{?}_0,\defrel_0}$, where $\args^\fix_0=\{a\}$, $\args^?_0=\{b,c\}$ and $\defrel_0=\{\ang{b,a},\ang{c,a}\}$, where uncertain arguments are represented as dashed nodes:

\begin{center}
\begin{tikzpicture}[modal,world/.append style=
{minimum size=0.5cm}]
\node[fixarg] (a) [] {{$a$}};
\node[unarg] (b) [left=1cm of a]{{$b$}};
\node[unarg] (c) [right=1cm of a]{{$c$}};

\draw[->] (b) edge (a);
\draw[->] (c) edge (a);

\end{tikzpicture}
\end{center}
Its completions are the following AFs:

\begin{center}
\begin{tabular}{c|c|c| c}

\begin{tikzpicture}[modal,world/.append style=
{minimum size=0.5cm}]
\node[fixarg] (a) [] {{$a$}};
\node[fixarg] (b) [left=1cm of a]{{$b$}};
\node (nameaf) [above=0.2cm of b] {$\naf_0$};
\node[fixarg] (c) [right=1cm of a]{{$c$}};

\draw[->] (b) edge (a);
\draw[->] (c) edge (a);
\end{tikzpicture}
&

\begin{tikzpicture}[modal,world/.append style=
{minimum size=0.5cm}]
\node[fixarg] (a) [] {{$a$}};
\node[fixarg] (b) [left=1cm of a]{{$b$}};
\node (nameaf) [above=0.2cm of b] {$\naf_1$};

\draw[->] (b) edge (a);

\end{tikzpicture}
&

\begin{tikzpicture}[modal,world/.append style=
{minimum size=0.5cm}]
\node[fixarg] (a) [] {{$a$}};
\node (nameaf) [above=0.2cm of a] {$\naf_2$};
\node[fixarg] (c) [right=1cm of a]{{$c$}};

\draw[->] (c) edge (a);
\end{tikzpicture}
&

\begin{tikzpicture}[modal,world/.append style=
{minimum size=0.5cm}]
\node[fixarg] (a) [] {{$a$}};

\node (nameaf) [above=0.2cm of a] {$\naf_3$};
\end{tikzpicture}
\end{tabular}
\end{center}

\end{example}
\par \medskip

As mentioned in the introduction, simple arg-IAFs lack the power to discriminate among possible completions, therefore lacking a sufficient level of granularity for our present purposes. One way to achieve such granularity is to allow dependency conditions between (sets of) arguments. These conditions express things as ``{at least one argument of} the set $Y$ is always present in a completion whenever {all arguments in} the set $X$ are present" (implicative dependencies) or else ``at least some argument from the set $X$ should be there" (disjunctive dependencies), or again ``not all arguments from the set $X$ can occur together" (NAND-dependencies). More formally, given an argument-incomplete abstract argumentation framework $\argprefix\niaf=\ang{\args^{\fix},\args^{?},\defrel}$, a \textbf{dependency} over $\argprefix\niaf$ is an expression of the form $\impor(X,Y)$, $\opor(X)$ or $\opnand (X)$ where $X$ and $Y$ are non-empty subsets of $\args^?$. The precise meaning of these dependencies is specified in the following definition.  

\begin{definition}[\cite{fazzingaijcai21}]\label{def:dep-arg-IAF} 
An \textbf{argument-incomplete abstract argumentation framework with dependencies} (dep-arg-IAF) 
is a tuple $\ndepargiaf=\ang{\args^{\fix},\args^{?},\defrel,\Delta}$ where $\ang{\args^{\fix},\args^{?},\defrel}$ is an argument-incomplete abstract argumentation framework and $\Delta$ is a set of dependencies over $\ang{\args^{\fix},\args^{?},\defrel}$.
A \textbf{completion} of $\ang{\args^{\fix},\args^{?},\defrel,\Delta}$ is any AF $\ang{\args^{\ast},\defrel^{\ast}}$ s.t.:
\begin{itemize}
\item $\ang{\args^{\ast},\defrel^{\ast}}$ is a completion of $\ang{\args^{\fix},\args^{?},\defrel}$; and
\item for all $\delta \in \Delta$:

\begin{itemize}
    \item If $\delta=\impor(X,Y)$, then $X\subseteq \args^\ast$ implies $Y \cap \args^\ast\neq \emptyset$.
    \item If $\delta=\opor(X)$, then $X\cap \args^\ast\neq \emptyset$.
     \item If $\delta=\opnand(X)$, then $X \cap \args^\ast \subset X$.
\end{itemize}
\end{itemize}

An \textbf{implicative argument-incomplete abstract argumentation framework} (imp-arg-IAF) is a tuple $\nimpargiaf=\ang{\args^{\fix},\args^{?},\defrel,\Delta}$ where $\Delta$ only contains dependencies of the kind $\impor(X,Y)$ where $Y$ is a singleton set. Given a dependency of this kind $\impor(X,\{y\})$, we simplify notation and write $\opimp(X,y)$.\footnote{The complexity of some reasoning tasks associated with this subclass of dep-arg-IAFs has been already studied \cite{fazzingaijcai21}.} 
\end{definition}

\begin{example} If we augment the arg-IAF of Example \ref{ex:arg-IAF} with $\Delta_0=\{\opimp (\{b\},c)\}$, then completion $\naf_1$ is excluded (i.e., $\naf_1 \notin \compfun(\ang{\args^{\fix}_0,\args^{?}_0,\defrel_0,\Delta_0})$. Analogously, if we take $\Delta_1$ to be $\{\opor(\{b,c\})\}$ then $\compfun(\ang{\args^{\fix}_0,\args^{?}_0,\defrel_0,\Delta_1})=\compfun(\ang{\args^{\fix}_0,\args^{?}_0,\defrel_0})\setminus \{\naf_3\}$.
    
\end{example}
The following result shows that dep-arg-IAFs are sufficiently powerful to isolate any possible set of completions of the underlying arg-IAF.


\begin{proposition}\label{prop:dep-arg-iaf}
Let $\nargiaf = \ang{\args^{\fix},\args^{?},\defrel}$ be an arg-IAF and $X \subseteq \compfun(\nargiaf)$ a subset of its possible completions. There is always a set $\Delta$ of dependencies such that $\compfun(\ang{\args^{\fix},\args^{?},\defrel,\Delta}) = X$.
\end{proposition}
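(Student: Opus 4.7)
The plan is to exploit the observation that any completion of $\nargiaf=\ang{\args^{\fix},\args^{?},\defrel}$ is uniquely determined by the subset $S=\args^{\ast}\cap\args^{?}$ of uncertain arguments it retains, since the defeat relation is pinned down by the restriction $\defrel^{\ast}=\defrel_{\restrict \args^{\ast}}$. Hence $\compfun(\nargiaf)$ is in bijection with $\wp(\args^{?})$, and the task reduces to exhibiting a set of dependencies whose joint constraints on $\args^{?}\cap \args^{\ast}$ select exactly the subsets corresponding to completions in $X$. I would do this by constructing, for each completion $\mathcal{F}\in \compfun(\nargiaf)\setminus X$, a single dependency $\delta_{\mathcal{F}}$ that is violated precisely by $\mathcal{F}$ and satisfied by every other completion, and then taking $\Delta=\{\delta_{\mathcal{F}}\mid \mathcal{F}\notin X\}$.

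The construction of $\delta_{\mathcal{F}}$ would split into three cases according to $S_{\mathcal{F}}=\args^{\ast}_{\mathcal{F}}\cap \args^{?}$. If $\emptyset \subsetneq S_{\mathcal{F}} \subsetneq \args^{?}$, I pick $\delta_{\mathcal{F}}=\impor(S_{\mathcal{F}},\args^{?}\setminus S_{\mathcal{F}})$; by the semantics of $\impor$, this dependency fails on a completion $\mathcal{F}'$ exactly when $S_{\mathcal{F}}\subseteq \args^{\ast}_{\mathcal{F}'}$ and $(\args^{?}\setminus S_{\mathcal{F}})\cap \args^{\ast}_{\mathcal{F}'}=\emptyset$, i.e.\ when $S_{\mathcal{F}'}=S_{\mathcal{F}}$. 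The two boundary cases fall outside the scope of $\impor$ because its two arguments must be non-empty, so I handle them with the other connectives: when $S_{\mathcal{F}}=\emptyset$, I set $\delta_{\mathcal{F}}=\opor(\args^{?})$, which fails exactly on the completion where no uncertain argument is present; and when $S_{\mathcal{F}}=\args^{?}$, I set $\delta_{\mathcal{F}}=\opnand(\args^{?})$, which fails exactly on the completion where every uncertain argument is present.

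I would then verify the equality $\compfun(\ndepargiaf)=X$ in two directions: any $\mathcal{F}\in X$ satisfies every $\delta_{\mathcal{F}'}$ with $\mathcal{F}'\notin X$, because by construction $\delta_{\mathcal{F}'}$ fails only on $\mathcal{F}'$; conversely, any $\mathcal{F}\notin X$ violates its own $\delta_{\mathcal{F}}$ and is therefore excluded. Together with the fact that the underlying arg-IAF completion structure is preserved, this gives the desired characterisation.

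The only subtle point I anticipate is the degenerate case $\args^{?}=\emptyset$, in which no dependencies can be formed at all (since all dependency connectives require non-empty subsets of $\args^{?}$): here $\compfun(\nargiaf)$ is a singleton, so either $X$ equals this singleton (and $\Delta=\emptyset$ works) or $X=\emptyset$, which cannot be realised. I would flag this boundary case and note that it is harmless, since the interesting content of the proposition concerns arg-IAFs with at least one uncertain argument. Beyond this, the main conceptual step is simply the recognition that $\impor$, $\opor$ and $\opnand$ together suffice to isolate any single $S\subseteq \args^{?}$, and the rest is bookkeeping.
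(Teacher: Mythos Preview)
Your proposal is correct and takes essentially the same approach as the paper: the paper frames the construction via propositional logic and CNF (one clause per excluded valuation, then case-split into $\opor$, $\opnand$, or $\impor$ according to the sign pattern of its literals), which unwinds to exactly your direct construction of one dependency $\delta_{\mathcal{F}}$ per excluded completion with the same three-way case split on $S_{\mathcal{F}}$. Your observation about the degenerate case $\args^{?}=\emptyset$ with $X=\emptyset$ is a genuine boundary issue that the paper's proof does not address either.
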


\begin{proof} The argument provided by \cite{fazzingaijcai21} for finite arg-IAFs can be generalized to the infinite case. If we look at $\args^?$ as a set of propositional variables, then dependencies over $\ang{\args^{\fix},\args^{?},\defrel}$ can be translated into formulas of the infinitary propositional language $\lanset(\args^?)$ given by the BNF:

\begin{center}
    $\varphi::= x \mid \bigwedge X \to \bigvee Y \mid \bigvee X \mid \lnot \bigwedge X$ where $x \in \args^?$ and $X,Y \in \wp(\args^?)\setminus\{\emptyset\}$. 
\end{center}
    In this picture, completions of $\ang{\args^{\fix},\args^{?},\defrel}$ correspond to propositional valuations over $\args^?$ where variables corresponding to arguments in the completions evaluate to true and other variables are set to false.
    Further, completions of $\ang{\args^{\fix},\args^{?},\defrel, \Delta}$ are those among the just described set of valuations that satisfy all formulas from $\Delta$. \\
    Importantly, any set of valuations over $\args^?$ can be encoded as a formula $\psi$ over $\args^?$ in conjunctive normal form (possibly using an infinitary language with infinite conjunctions and disjunctions if the set $\args^?$ is infinite), where every conjunct corresponds to (the negation of) a valuation not in the set. 
    This formula can be then translated into an equivalent set of formulas of  $\Gamma\subseteq \lanset(\args^?)$ (i.e., to a set of dependencies!) reasoning by cases over the conjuncts of $\psi$. If a conjunct $X$ only contains positive literals $x_1,...,x_n,...$ then, we put $\bigvee X$ into $\Gamma$. If it only contains negative literals $\lnot x_1,...,\lnot x_n,...$, then we add $\lnot \bigwedge \{x_1,...,x_n,...\}$ to $\Gamma$. Finally, if $X$ contains both positive and negative literals $\{x_1,\dots, x_n, \dots, \lnot y_1 \dots \lnot y_k\}$, then we add $\bigwedge \{y_1,\dots,y_k,\dots\} \to \bigvee \{x_1,\dots,x_n,\dots\}$ to $\Gamma$.
\end{proof}

\begin{remark}
    
 Our definition of an argument-incomplete argumentation framework with dependencies differs from the one provided by Fazzinga et al. \cite{fazzingaijcai21} because we exclude $\mathsf{CHOICE}$-dependencies. However, as explained by these authors \cite[p. 191]{fazzingaijcai21} and generalised in the previous proposition, this simplification does not affect the expressive power of dep-arg-IAFs, which will be our main focus of interest here. Moreover, in this regard, it is equivalent to using \textit{constrained argument-incomplete argumentation frameworks}, a subclass of the more general constrained incomplete argumentation frameworks \cite{clar2021,jlc,maillyciafs,mailly2024constrainedjournal}. In these structures, dependencies are expressed through a Boolean formula in a language that contains a propositional variable for each argument in $\args^?$. 
 
\end{remark}

\subsection{Relative Expressivity}\label{sec:def-expressivity}
The formalisms we just presented can be compared with regard to their power to express different sets of completions. To deal with the structured (i.e., non-abstract) formalisms that we will use in the remainder of the paper, we refine a notion of expressivity previously introduced in the literature \cite{clar2021,mailly2024constrainedjournal}.

\begin{definition}\label{def:expressivity}
    Given two sets of AFs (i.e., of completions) $S=\{\ang{\args_1,\defrel_1},...\}$ and $S'=\{\ang{\args_1',\defrel_1'},...\}$ we say that they are \textbf{equivalent} (in symbols, $S\approxeq S'$) iff there is a bijective function $i:\bigcup_{\ang{\args,\defrel}\in S}\args \to \bigcup_{\ang{\args',\defrel'}\in S'}\args'$ such that $\{\ang{i[\args],i[\defrel]} \mid \ang{\args,\defrel}\in S \}=S'$, where $i[\args]=\{i(x)\mid x \in \args\}$ and $i[\defrel]=\{\ang{i(x),i(y)}\mid \ang{x,y}\in \defrel\}$. Note that if such a function exists, then it is a one-to-one isomorphism among the elements of $S$ and $S'$.\par 

Given two classes of argumentative formalisms with qualitative uncertainty $\mathcal{X}$ and $\mathcal{Y}$, we say that \textbf{$\mathcal{X}$ is at least as expressive as $\mathcal{Y}$} (in symbols, $\mathcal{Y} \lexpressive \mathcal{X}$) iff for all $Y \in \mathcal{Y}$ there is a $X \in \mathcal{X}$ such that $\compfun(X) \approxeq \compfun(Y)$. If the function $i$ is clear enough from context, we omit it and identify each $x$ with $i(x)$.
\end{definition}

\begin{fact}\label{fact:lespressive-is-a-preorder} $\lexpressive$ is reflexive and transitive. \end{fact}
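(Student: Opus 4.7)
My strategy is to first observe that $\approxeq$ on sets of AFs is itself reflexive and transitive, and then lift these properties to $\lexpressive$ through its definition. Both implications are routine, so I expect no genuine obstacle; the fact is essentially a sanity check that $\lexpressive$ deserves its preorder-like notation.

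For reflexivity of $\approxeq$, given a set of AFs $S$, I would take the identity map on $\bigcup_{\ang{\args,\defrel}\in S}\args$ as the required bijection; it is trivially bijective and sends each $\ang{\args,\defrel} \in S$ to itself, so $S \approxeq S$. Reflexivity of $\lexpressive$ follows at once: for any $Y \in \mathcal{X}$, the witness $X = Y$ gives $\compfun(X) \approxeq \compfun(Y)$.

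For transitivity of $\approxeq$, if $S \approxeq S'$ via a bijection $i$ and $S' \approxeq S''$ via a bijection $j$, I would take $j \circ i$ as the witnessing bijection from the argument union of $S$ to that of $S''$. Being a composition of bijections, it is bijective, and unfolding the image operation gives $(j \circ i)[\args] = j[i[\args]]$ and $(j \circ i)[\defrel] = j[i[\defrel]]$ for every $\ang{\args,\defrel} \in S$. Hence the image of $S$ under $j \circ i$ coincides with the image of $S'$ under $j$, which by hypothesis equals $S''$. Transitivity of $\lexpressive$ is then immediate by chaining witnesses: given $\mathcal{X} \lexpressive \mathcal{Y}$ and $\mathcal{Y} \lexpressive \mathcal{Z}$, for any $X \in \mathcal{X}$ I pick $Y \in \mathcal{Y}$ with $\compfun(Y) \approxeq \compfun(X)$ and then $Z \in \mathcal{Z}$ with $\compfun(Z) \approxeq \compfun(Y)$; transitivity of $\approxeq$ yields $\compfun(Z) \approxeq \compfun(X)$, so $Z$ witnesses $\mathcal{X} \lexpressive \mathcal{Z}$ at $X$.

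The only minor point to be careful about is that the bijection $i$ in the definition of $\approxeq$ acts globally on the union of all argument names appearing across $S$, not on each AF individually, so the transitivity verification must use the same composed map $j \circ i$ uniformly across all elements of $S$ rather than a different composition per AF. Once this is noted, both verifications are straightforward set-theoretic unfoldings.
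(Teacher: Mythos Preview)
Your proposal is correct and is exactly the natural argument; the paper itself states this fact without proof, so there is nothing to compare against beyond noting that your route---establishing that $\approxeq$ is reflexive and transitive via the identity and composition of the witnessing bijections, and then lifting these to $\lexpressive$ by chaining witnesses---is the expected one.
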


 \begin{remark}
     When dealing with two classes of abstract formalisms $\mathcal{X}$ and $\mathcal{Y}$ (whose arguments belong to $\uni$), one can show that the notion of equivalence boils down to identity as far as expressivity is concerned. That is to say, $\mathcal{Y} \lexpressive \mathcal{X}$ holds iff for all $Y \in \mathcal{Y}$ there is a $X \in \mathcal{X}$ such that $\compfun(X) =\compfun(Y)$.
 \end{remark}

 \begin{example} $\text{arg-IAFs }\lexpressive \text{dep-arg-IAFs}$. To see this, note that  $\compfun(\ang{\args^\fix,\args^?,\defrel})=\compfun(\ang{\args^\fix,\args^?,\defrel,\emptyset})$ (where $\emptyset$ is a set of dependencies). 
 \end{example}

\begin{remark}[On weaker notions of equivalence] One might wonder whether our notion of completion equivalence is just too strong. For instance, it might seem at first sight that it is enough to require the existence of a bijection $i:S \to S'$ such that for each $\af \in S$, $\af$ is isomorphic to $i(\af)$. However, this weaker notion finds some counterintuitive positive cases. For instance, the following $S$ and $S'$ are equivalent under this notion:

\begin{center}
    
\begin{tikzpicture}[->,>=stealth,shorten >=1pt,auto,node distance=1.4cm,
                thick,main node/.style={circle,draw,font=\bfseries},uncertain/.style={rectangle,draw,dashed,font=\bfseries}]

\node[arg] (a) {$a$};
\node[arg] (b) [right of= a] {$b$};
\path[->] (a) edge (b);

\node[arg] (a1) [below of= a] {$a$};
\node[arg] (b1) [right of= a1] {$b$};
\path[->] (b1) edge (a1);

\node (firstset) [draw, fit=(a) (b1),label=above:$S$] {};

\node[arg] (a2) [right=2cm of b] {$a$};
\node[arg] (b2) [right of= a2] {$b$};
\path[->] (a2) edge (b2);

\node[arg] (c) [below of= a2] {$c$};
\node[arg] (d) [right of= c] {$d$};
\path[->] (d) edge (c);

\node (secondset) [draw, fit=(a2) (d),label=above:$S'$] {};
\end{tikzpicture}
    \end{center}
\noindent
    Note that $b$ gets accepted in one completion of $S$ under any of the semantics defined by Dung \cite{dung1995acceptability}; but it is not accepted in any completion of $S'$. Hence, sets of frameworks where one accepts different arguments cannot be argumentatively equivalent. {If we transpose the previous example to structured argumentation frameworks, we could have equivalent frameworks in which non-equivalent propositions are accepted/believed, which is a clear undesired consequence.}
\end{remark}

\section{ASPIC$^{+}$ Frameworks with Uncertain Components}\label{sec:aspic-with-uncertainty}

We now move to ASPIC$^{+}$, a popular framework for structured argumentation. Here the preliminary question is how to define the notion of an incomplete ASPIC$^{+}$ framework. We have essentially three options, already mentioned in the literature: uncertainty generated by uncertain inference rules \cite{baumeister2021acceptance} (studied more in detail in \cite{ai32023}), by uncertain premises \cite{odekerken2023argumentative} (later developed in \cite{odekerken2025argumentative}), or by uncertain preference profiles \cite{baumeister2021acceptance}. 
Here, we focus on the first two options, providing the formal details of their definitions in sections \ref{subsec:uncertainrules} and \ref{subsec:uncertainpremises}.\footnote{ Uncertain preferences are out of the scope of this study because, as mentioned by \cite{baumeister2021acceptance}, they generate uncertain defeats at the abstract level,  and our focus here is exclusively on uncertain arguments.} We then proceed to the main task of comparing their expressive power, as defined in Section \ref{sec:def-expressivity}, with respect to the abstract formalisms reviewed above. 
\par 

\subsection{ASPIC$^{+}$ in a Nutshell}

Here we provide the necessary ASPIC$^{+}$ background in a self-contained but compact way. For the motivation behind the definitions, as well as discussion of alternative notions, the reader is referred to \cite{modgil2013aspic,modgil2014tutorial}. As for AFs in abstract argumentation, the central notion here is that of a \textit{structured argumentation framework} (SAF) and its associated, derived AF. However, we need a number of previous notions before getting there.

\begin{definition}\label{def:aspic-theory}
    An \textbf{argumentation theory} is a tuple $\at=\ang{\lanset, \negfun, \rulset, \namefun,\kb}$ where:
\begin{itemize}
\item $\lanset$ is a formal language.
\item $\negfun:\lanset \to \wp(\lanset)$ is a contrary function. We say that:
\begin{itemize}
\item $\varphi$ is a contrary of $\psi$ iff $\varphi \in \overline{\psi}$ but $\psi \notin \overline{\varphi}$.
\item $\varphi$ is a contradictory of $\psi$ iff $\varphi \in \overline{\psi}$ and $\psi \in \overline{\varphi}$.
\end{itemize}
It is assumed that each $\varphi \in \lanset$ has at least one contradictory.

\item $\rulset=\rulset_s \cup \rulset_d$ with $\rulset_s \cap \rulset_d =\emptyset$ is a set of inference rules (pairs of finite sets of formulas and formulas, $\rulset \subseteq \wp_{fin}(\lanset)\times\lanset$).\footnote{Given a set $S$, we denote by $\wp_{fin}(S)$ the set of all its finite subsets.} $\rulset_s$ represents strict rules while $\rulset_d$ represents defeasible rules.

\item $\namefun:\rulset_d \to \lanset$ is a partial naming function for defeasible rules.
\item $\kb \subseteq \lanset$ is a knowledge base which comes split into two disjoint subsets $\kb_n$ (axioms) and $\kb_p$ (ordinary premises).
\end{itemize}
\end{definition}

\begin{definition}

The set of \textbf{arguments of a given argumentation theory} $\at=\unravelat$, denoted $\args(\unravelat)$, is defined inductively. Together with the notion of argument, we define some auxiliary functions: $\sub(\cdot)$ (returns the \textbf{subarguments} of an argument), $\prem(\cdot)$ (returns the \textbf{premises} of an argument), $\conc(\cdot)$ (returns the \textbf{conclusion} of an argument), and $\ftoprule(\cdot)$ (returns the \textbf{last rule} employed in the construction of an argument). We establish that $A \in \args(\unravelat)$ iff $A$ is any finite expression built by the application of the following conditions:

\begin{itemize}
\item $A=[\varphi]$ if $\varphi \in \kb$, with 
\begin{itemize}
    \item 
$\prem(A)=\conc(A)=\{\varphi\}$, 
\item 
$\sub(A)=\{[\varphi]\}$, and 
\item $\ftoprule(A)$ is left undefined.

\end{itemize}
\item $A=[A_1,...,A_n \sto \varphi]$ (with $n\geq 0$) if $A_1,...,A_n$ are arguments and $\ang{\{\conc(A_1),...,\conc(A_n)\},\varphi} \in \rulset_s$, with 

\begin{itemize}
\item $\prem(A)=\prem(A_1)\cup...\cup\prem(A_n)$, 

\item $\conc(A)=\varphi$, 

\item $\sub(A)=\{A\}\cup \sub(A_1)\cup...\cup\,\sub(A_n)$, 

\item $\ftoprule(A)=\ang{\{\conc(A_1),...,\conc(A_n)\},\varphi}$.

\end{itemize}
\item $A=[A_1,...,A_n \Rightarrow \varphi]$ (with $n\geq 0$) if $A_1,...,A_n$ are arguments and $\ang{\{\conc(A_1),...,\conc(A_n)\},\varphi} \in \rulset_d$, with:

\begin{itemize}
\item $\prem(A)=\prem(A_1)\cup...\cup\prem(A_n)$, 

\item $\conc(A)=\varphi$, 

\item $\sub(A)=\{A\}\cup \sub(A_1)\cup...\cup\,\sub(A_n)$, 

\item $\ftoprule(A)=\ang{\{\conc(A_1),...,\conc(A_n)\},\varphi}$.

\end{itemize}
\end{itemize}

We omit squared brackets whenever no ambiguity arises. Note that arguments do not depend on $\contfun$ nor on $\namefun$, so we sometimes abbreviate $\args(\unravelat)$ as $\args(\lanset,\rulset,\kb)$. \black Given $A \in \args (\at)$ we define the \textbf{rules of $A$} as $\rulset(A)=\{\ftoprule(B)\mid B \in \sub(A)\}$.  \par \medskip

   \begin{remark}[Rules and arguments without premises]\label{remark:premise-less-rules} Note that the previous definitions allow for (i) rules with an empty set of formulas on the left-hand side (i.e., it is possible that $\ang{\{\},\varphi} \in \rulset$); and consequently (ii) arguments with an empty set of premises ($\Rightarrow \varphi$ and $\sto\varphi$, but also more complex ones like $\sto \varphi, \Rightarrow \psi \Rightarrow \delta$). If an argument has no premise, we say it is \textbf{premiseless}. We say that an argument is \textbf{simple} iff it is either a single formula (a premise) or an argument of form $\Rrightarrow\varphi$ with $\Rrightarrow\in \{\sto,\Rightarrow\}$.
     \end{remark}
     
      \begin{remark}[Excluding infinite arguments] The original definition of ASPIC$^+$ arguments allows for infinite arguments \cite{modgil2013aspic}: arguments which are rooted in a conclusion but keep growing infinitely without ``finding'' a set of premises. These abnormal arguments are later excluded from the definition of associated AF in the ASPIC literature. We exclude them from the very definition of argument for presentational purposes.         
     \end{remark}
  \end{definition}

\begin{definition}
Given an argumentation theory $\at=\unravelat$, and two arguments $A,B\in \args(\at)$, we say that \textbf{$A$ attacks $B$} iff $A$ undermines, rebuts or undercuts $B$, where: 

\begin{itemize}

\item $A$ \textbf{undermines} $B$ (on $B'$) iff $\conc(A)\in \overline{\varphi}$ for some $B'=\varphi \in \prem(B)$ such that $\varphi \in \kb_p$. {In that case we say that $A$ contrary-undermines $B$ if $\conc(A)$ is a contrary of $\varphi$.}
\item $A$ \textbf{rebuts} $B$ (on $B'$) iff $\conc(A)\in \overline{\varphi}$ for some $B' \in \sub(B)$ of the form $B_1',....,B_n'\Rightarrow\varphi$. {In that case we say that $A$ contrary-rebuts $B$ if $\conc(A)$ is a contrary of $\varphi$.}

\item $A$ \textbf{undercuts} $B$ (on $B'$) iff $\conc(A)\in \overline{\namefun(\ftoprule(B'))}$ for some $B' \in \sub(B)$ with $\ftoprule(B')\in \rulset_{d}$.
\end{itemize}
\end{definition}

\begin{definition}
A \textbf{structured argumentation framework} (SAF) is a tuple $\nsaf=\unravelsaf$ where $\at=\unravelat$ is an argumentation theory and $\preceq$ is a preferential ordering (usually a partial preorder) relation among arguments $\preceq \subseteq \args(\at)\times \args(\at)$ (its strict counter-part of $\preceq$, noted $\prec$, is defined as usual: $\prec=\preceq\setminus \preceq^{-1}$).
\end{definition}

\begin{definition}
Given $\nsaf=\unravelsaf$, and $A,B\in \args(\unravelat)$, we say that $A$ \textbf{defeats} $B$ iff:
\begin{itemize}
 \item[(i)]  $A$ undercuts/contrary-rebuts/contrary-undermines $B$; \black or 
 \item[(ii)] $A$ undermines/rebuts $B$ (on $B'$) and $A\not \prec B'$. 
\end{itemize}
The set of all defeats for a given $\nsaf$ is denoted $\defrel(\nsaf)$. We instead equate the set of arguments $\args(\nsaf)$ of a given $\nsaf$ with those of its underlying argumentation theory, i.e. $\args(\nsaf) = \args(\at)$. In general, given $\nsaf=\unravelsaf$, we use $\lanset(\nsaf)$ to denote $\lanset$ and apply the same convention for the rest of the components (including the non-primitive components $\args(\nsaf)$ and $\rel(\nsaf)$). 
\end{definition}

The following definition provides the key notion that lifts structured argumentation frameworks to the abstract level by identifying their associated graph.
\begin{definition}
   Let $\nsaf=\unravelsaf$ be given, the \textbf{abstract argumentation framework associated to $\nsaf$} is define as $\daf(\nsaf)=\ang{\args(\nsaf),\defrel(\nsaf)}$. 

\end{definition}

\begin{example}\label{example:saf} Consider $\nsaf_0=\indexedsaf{}$, where:
\begin{itemize}
    \item $\lanset$ is the language of propositional logic; 
    \item $\overline{\cdot}$ is given by classical negation (i.e., $\varphi \in \overline{\psi}$ iff $\varphi=\lnot \psi$ or $\psi=\lnot \varphi$);
    \item $\rulset_s=\{\ang{\{u\},\lnot s}\}$;
    \item $\rulset_d=\{\ang{\{p\},q}, \ang{\{w\},r}, \ang{\{s\},\lnot r} \}$;
    \item  $\namefun$ is only defined for $\namefun(\ang{\{p\},q})=r$;
    \item $\kb_n=\{p,u\}$;
    \item $\kb_p=\{s, w\}$;
    \item $\preceq =\{\ang{ s \Rightarrow \lnot r, w \Rightarrow r }\} $.
\end{itemize}

The associated AF looks as follows, where each box is an argument and arrows represent the defeat relation:

\begin{center}
\begin{tikzpicture}[->,>=stealth,shorten >=1pt,auto,node distance=1.4cm,
                thick,main node/.style={circle,draw,font=\bfseries},uncertain/.style={rectangle,draw,dashed,font=\bfseries}]

\node[sarg] (s) {$s$};
\node[sarg] (snr) [right=0.5cm of s] {$s\Rightarrow \lnot r$};

\node[sarg] (pq) [right=2cm of snr] {$p \Rightarrow q$};
\node[sarg] (p) [below=0.1cm  of pq] {$p$};

\node[sarg] (uns) [above of=s] {$u\sto \lnot s$};
\node[sarg] (u) [above=0.1cm of uns] {$u$};

\node[sarg] (wr) [below of=s] {$w\Rightarrow r$};
\node[sarg] (w) [below=0.1cm  of wr] {$w$};

\path[->] (snr) edge (pq);
\path[->] (uns) edge (snr);
\path[->] (uns) edge (s);
\path[->] (wr) edge (snr);
\end{tikzpicture}
\end{center}
    
\end{example}

\subsection{Uncertain Inference Rules}\label{subsec:uncertainrules}
 
 To define incomplete structured frameworks, we first consider the set of rules of a given argumentation system as a source of uncertainty, following the IAFs spirit. {This definition was first suggested by \cite{baumeister2021acceptance} and formally presented in \cite{ai32023}.}
 
 \begin{definition}[\cite{ai32023}]\label{def:rul-ISAFs}
     A \textbf{rule-incomplete structured argumentation framework} (rul-ISAF) is a tuple $\rulprefix\nisaf=\unravelsaf$, where every component is just as in a SAF except from the set of rules $\rulset$, which is split into four pairwise disjoint subsets $\rulset=\rulset^{F}_s\cup\rulset_{s}^{?}\cup\rulset^{F}_{d}\cup\rulset^{?}_{d}$, representing respectively certain strict rules, uncertain strict rules, certain defeasible rules and uncertain defeasible rules.  We define $\rulset^{F}=\rulset^{F}_s\cup \rulset^{F}_{d}$ (the set of certain rules) and $\rulset^{?}=\rulset^{?}_s\cup \rulset^{?}_{d}$ (the set of uncertain rules). 
 
 \end{definition}
 
     
 

 \begin{definition}\label{def:rul-isaf-completions}
     
A \textbf{rule-completion of $\rulprefix\nisaf$} is any 
 $\nsaf^{\ast}=\ang{\lanset, \negfun, \rulset^{\ast}, {\namefun^\ast}, \kb, \preceq^{\ast}}$ s.t.:
\begin{itemize}
\item $\rulset^{^\ast}=\rulset_{s}^{\ast}\cup \rulset_{d}^{\ast}$ is such that:
\begin{itemize}
\item $\rulset^{\fix}_{s}\subseteq \rulset_{s}^{\ast}\subseteq (\rulset^{\fix}_{s} \cup \rulset_{s}^{?})$; and
\item $\rulset^{\fix}_{d}\subseteq \rulset_{d}^{\ast}\subseteq (\rulset^{\fix}_{d} \cup \rulset_{d}^{?})$.
\end{itemize}

\item {$\namefun^\ast= \namefun \cap (\rulset^\ast\times \lanset)$.}

\item $\preceq^{\ast}=\preceq\cap(\args^\ast \times \args^\ast)$ where $\args^\ast$ is the set of arguments generated using $\rulset^\ast$.
\end{itemize}

We denote by $\rulprefix\compfun(\nisaf)$ the set of rule-completions of $\rulprefix\nisaf$.

Two distinguished rule-completions will be used for proving our central results, namely:
\begin{itemize}
\item $\nsaf^{\fix}$ is the rule-completion whose set of rules is $\rulset^{\fix}$.
\item $\nsaf^{max}$ is just $\rulprefix\nisaf$ (i.e., the rule-completion that uses both certain and uncertain rules $\rulset^\fix \cup \rulset^?$).
\end{itemize}
$\nsaf^{\fix}$ is the minimal completion, where all uncertain rules are discarded, while $\nsaf^{max}$ is the maximal one where all rules are accepted. 
Let $\nsaf^\ast \in \rulprefix\compfun(\nisaf)$ and let $X \in \args(\nsaf^\ast)$, we define the set of uncertain (resp.\ certain) rules of argument $X$ as $\rulset^{?}(X)=\rulset(X)\cap \rulset^{?}$ (resp.\ $\rulset^{\fix}(X)=\rulset(X)\cap \rulset^{\fix}$). This function is lifted to sets of arguments by setting $\rulset^?(\Gamma)=\bigcup_{Y \in \Gamma}\rulset^?(Y)$.
 \end{definition}
\par

To compare the expressivity of incomplete formalisms we again need to lift them at the abstract level by defining the abstract completions of a given $\rulprefix\nisaf$. These simply correspond to the abstract argumentation frameworks associated with its rule-completions.

\begin{definition}
    The set of (abstract) \textbf{completions} of a given $\rulprefix\nisaf$ is:
\begin{align*}
\compfun(\rulprefix\nisaf)=&\{\daf(\nsaf^{\ast})\mid  \nsaf^{\ast}\in \rulprefix\compfun(\rulprefix\nisaf)\}\text{.}    
\end{align*}
\end{definition}

\begin{example}
    Consider $\rulprefix\nisaf_0=\unravelsaf$, where every component is as in the SAF of Example \ref{example:saf} except for $\rulset$, which is defined as follows:
\begin{itemize}
    \item $\rulset^\fix_s=\emptyset$;
        \item $\rulset^?_s=\{\ang{\{u\},\lnot s}\}$;
    \item $\rulset^\fix_d=\{\ang{\{w\},r}, \ang{\{s\},\lnot r} \}$; and
        \item $\rulset^?_d=\{\ang{\{p\},q}\}$.
\end{itemize}
This rul-ISAF has four rule-completions (one for each subset of $\rulset^?$) and four abstract completions. These are represented as follows: 
\par \smallskip
\scalebox{0.95}{
\begin{tabular}{c | c}

     \begin{tikzpicture}[->,>=stealth,shorten >=1pt,auto,node distance=1.4cm,
                thick,main node/.style={circle,draw,font=\bfseries},uncertain/.style={rectangle,draw,dashed,font=\bfseries}]

\node[sarg] (s) {$s$};
\node[sarg] (snr) [right=0.5cm of s] {$s\Rightarrow \lnot r$};

\node[sarg] (p) [below=0.1cm  of pq] {$p$};

\node[sarg] (u) [above=0.1cm of uns] {$u$};

\node[sarg] (wr) [below of=s] {$w\Rightarrow r$};
\node[sarg] (w) [below=0.1cm  of wr] {$w$};

\path[->] (wr) edge (snr);
\end{tikzpicture}
\qquad
& 
\qquad

\begin{tikzpicture}[->,>=stealth,shorten >=1pt,auto,node distance=1.4cm,
                thick,main node/.style={circle,draw,font=\bfseries},uncertain/.style={rectangle,draw,dashed,font=\bfseries}]

\node[sarg] (s) {$s$};
\node[sarg] (snr) [right=0.5cm of s] {$s\Rightarrow \lnot r$};

\node[sarg] (pq) [right=2cm of snr] {$p \Rightarrow q$};
\node[sarg] (p) [below=0.1cm  of pq] {$p$};

\node[sarg] (u) [above=0.1cm of uns] {$u$};

\node[sarg] (wr) [below of=s] {$w\Rightarrow r$};
\node[sarg] (w) [below=0.1cm  of wr] {$w$};

\path[->] (snr) edge (pq);

\path[->] (wr) edge (snr);
\end{tikzpicture}

\\
\hline

& \\

\begin{tikzpicture}[->,>=stealth,shorten >=1pt,auto,node distance=1.4cm,
                thick,main node/.style={circle,draw,font=\bfseries},uncertain/.style={rectangle,draw,dashed,font=\bfseries}]

\node[sarg] (s) {$s$};
\node[sarg] (snr) [right=0.5cm of s] {$s\Rightarrow \lnot r$};

\node[sarg] (p) [below=0.1cm  of pq] {$p$};

\node[sarg] (uns) [above of=s] {$u\sto \lnot s$};
\node[sarg] (u) [above=0.1cm of uns] {$u$};

\node[sarg] (wr) [below of=s] {$w\Rightarrow r$};
\node[sarg] (w) [below=0.1cm  of wr] {$w$};

\path[->] (uns) edge (snr);
\path[->] (uns) edge (s);
\path[->] (wr) edge (snr);
\end{tikzpicture}
     & 

     \begin{tikzpicture}[->,>=stealth,shorten >=1pt,auto,node distance=1.4cm,
                thick,main node/.style={circle,draw,font=\bfseries},uncertain/.style={rectangle,draw,dashed,font=\bfseries}]

\node[sarg] (s) {$s$};
\node[sarg] (snr) [right=0.5cm of s] {$s\Rightarrow \lnot r$};

\node[sarg] (pq) [right=2cm of snr] {$p \Rightarrow q$};
\node[sarg] (p) [below=0.1cm  of pq] {$p$};

\node[sarg] (uns) [above of=s] {$u\sto \lnot s$};
\node[sarg] (u) [above=0.1cm of uns] {$u$};

\node[sarg] (wr) [below of=s] {$w\Rightarrow r$};
\node[sarg] (w) [below=0.1cm  of wr] {$w$};

\path[->] (snr) edge (pq);
\path[->] (uns) edge (snr);
\path[->] (uns) edge (s);
\path[->] (wr) edge (snr);
\end{tikzpicture}

\end{tabular}
}
\end{example}

\begin{remark}
    The definition of rul-ISAFs and rule-completion assumes that for each rule $R \in \rulset$, it is known whether $R$ is strict or defeasible. The only thing that might not be known is whether $R$ is applicable. 
\end{remark}

The first observation about rul-ISAFs is that they generate only argument-incompleteness and not defeat-incompleteness at the abstract level.

\begin{proposition}\label{prop:rul-ISAFs-arg-incompleteness}
   Let $\rulprefix\nisaf$ be a rul-ISAF, and let $\ang{\args_1,\defrel_1},\ang{\args_2,\defrel_2}\in \compfun(\rulprefix\nisaf)$. Then, for every $X,Y \in \args_1\cap \args_2$, $\ang{X,Y} \in \defrel_1$ iff $\ang{X,Y}\in \defrel_2$.
\end{proposition}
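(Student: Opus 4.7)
The plan is to unpack the definition of defeat into its two ingredients -- attack (one of undermine, rebut, undercut) and the comparison with $\preceq^*$ -- and then argue that each ingredient is invariant across any two rule-completions that happen to share the arguments $X$ and $Y$. Write $\nsaf^*_1, \nsaf^*_2 \in \rulprefix\compfun(\rulprefix\nisaf)$ for the rule-completions giving rise to $\ang{\args_1,\defrel_1}$ and $\ang{\args_2,\defrel_2}$, and observe first that if $X \in \args_1 \cap \args_2$ then every $B' \in \sub(X)$ is also in $\args_1 \cap \args_2$: subarguments are constructed with a subset of the rules used by $X$, so they survive in any rule-completion that contains $X$. This observation will be used silently when quantifying over subarguments of $X$ and $Y$.

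Next, I would show that ``$X$ attacks $Y$ in $\nsaf^*_1$'' iff ``$X$ attacks $Y$ in $\nsaf^*_2$''. The three attack clauses only refer to: (i) the conclusions and premises of $X$ and $Y$ and of their subarguments, (ii) the contrary function $\negfun$, (iii) the split $\kb = \kb_n \cup \kb_p$, and (iv) the naming function applied to top rules of defeasible subarguments. Components (ii) and (iii) are shared across all rule-completions by Definition \ref{def:rul-isaf-completions}. Component (i) is intrinsic to the arguments $X$ and $Y$ as finite syntactic expressions, hence identical wherever those arguments appear. For (iv), recall $\namefun^*_i = \namefun \cap (\rulset(\nsaf^*_i)\times \lanset)$; but any top rule of a subargument of $Y$ lies in $\rulset(\nsaf^*_1)\cap \rulset(\nsaf^*_2)$, so $\namefun^*_1$ and $\namefun^*_2$ coincide on the rules relevant to an undercut of $Y$. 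Hence the undermine/rebut/undercut predicates yield the same answer in both completions.

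Finally, I would argue the preference-sensitive step is also preserved. By Definition \ref{def:rul-isaf-completions}, $\preceq^*_i = \preceq \cap (\args(\nsaf^*_i)\times \args(\nsaf^*_i))$. Since $X$ and every $B' \in \sub(Y)$ lie in $\args_1 \cap \args_2$, we get $X \prec^*_1 B'$ iff $X \prec B'$ iff $X \prec^*_2 B'$. Combining this with the attack equivalence from the previous step, and noting that undercuts do not depend on preferences at all, yields $\ang{X,Y}\in \defrel_1$ iff $\ang{X,Y}\in \defrel_2$, which is the desired conclusion.

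The statement is essentially a bookkeeping claim, so I do not expect a genuine obstacle; the one place requiring care is checking that preference and naming are \emph{restrictions} of globally fixed objects, so that passing between completions can neither add nor remove a comparison/name on shared arguments -- a point that would be false if, for instance, $\preceq^*$ were allowed to vary freely with the completion.
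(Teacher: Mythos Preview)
Your proposal is correct and follows essentially the same approach as the paper: both decompose defeat into its attack component (undermine/rebut/undercut) and its preference component, then argue that each depends only on data shared by any two rule-completions containing $X$ and $Y$. Your version is in fact slightly more careful than the paper's, making explicit the subargument-closure observation and the point that $\namefun^\ast$ and $\preceq^\ast$ are restrictions of fixed global objects, which the paper leaves implicit.
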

\begin{proof}
    By definition of completion, we have that $\ang{\args_1,\defrel_1}$ and $\ang{\args_2,\defrel_2}$  are generated by the rule-completions $\nsaf_1$ and $\nsaf_2$ respectively. Since $X,Y \in \args_1\cap \args_2$, we have that both rule-completions share at least the uncertain rules that take part in the construction of $X$ and $Y$. Now, suppose that $\ang{X,Y} \in \defrel_1$. The latter is the case iff (i) $X$ undercuts/contrary-undermines/contrary-rebuts $Y$ (wrt $\nsaf_1$), or (ii) $X$ undermines/rebuts $Y$ (wrt $\nsaf_1$) and $X\not \prec_1 Y'$. On the one hand, we have that $X$ undercuts/contrary-undermines/contrary-rebuts $Y$ (wrt $\nsaf_1$) iff $X$ undercuts/contrary-undermines/contrary-rebuts $Y$ (wrt $\nsaf_2$), because the definition of undercutting/contrary-undermining/contrary-rebuttal only involves (parts of) components of $\nsaf_1$ that are also present in $\nsaf_2$ (namely, $\overline{\cdot}_1$, the rules of $Y$ and $\namefun_1$). Similarly, we have that $X$ undermines/rebuts $Y$ on $Y'$ (wrt $\nsaf_1$) and $X\not \prec_1 Y'$ iff $X$ undermines/rebuts $Y$ on $Y'$ (wrt $\nsaf_2$) and $X\not \prec_2 Y'$. To see this, note that every element taking part in the definition of undermining and rebuttal for $X$ and $Y$ are shared by $\nsaf_1$ and $\nsaf_2$, and we have that $X\not \prec_1 Y'$ iff $X\not \prec_2 Y'$ holds by the definition of completions for rul-ISAFs.
 \end{proof}

Our first central result shows that rul-ISAFs are at least as expressive as arg-IAFs from Section \ref{subsec:arg-IAFs}. Its proof proceeds by showing how to associate, to any arg-IAF, a corresponding rul-ISAF with an equivalent set of completions.

\begin{theorem}\label{thm:r-ISAF-more-than-a-IAFs}
           $\text{arg-IAFs} \lexpressive \text{rul-ISAFs}$.
\end{theorem}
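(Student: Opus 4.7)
The plan is to exhibit, for an arbitrary arg-IAF $\nargiaf = \ang{\args^{\fix}, \args^{?}, \defrel}$, a rul-ISAF whose rule-completions are in bijection with the completions of $\nargiaf$, and such that the induced map on abstract completions is an isomorphism of AFs. The basic idea is to encode each argument of $\nargiaf$ as a premiseless defeasible-rule argument in the structured framework, and each defeat of $\nargiaf$ as an undercut.

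Concretely, I would let $\lanset$ contain distinct atoms $p_a$ and $n_a$ (plus their negations $\lnot p_a, \lnot n_a$) for every $a \in \args^{\fix} \cup \args^{?}$; define $\negfun$ so that $p_a$ and $\lnot p_a$ are contradictories (likewise $n_a, \lnot n_a$, guaranteeing the ``every formula has a contradictory'' requirement of Definition \ref{def:aspic-theory}), and additionally set $p_a \in \overline{n_b}$ for every $\ang{a,b} \in \defrel$. I would take $\kb = \emptyset$, $\rulset_s = \emptyset$, and introduce, for each $a \in \args^\fix \cup \args^?$, a premiseless defeasible rule $r_a = \ang{\emptyset, p_a}$ named by $\namefun(r_a) = n_a$; place $r_a$ in $\rulset_d^{\fix}$ if $a \in \args^\fix$ and in $\rulset_d^{?}$ otherwise. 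Finally, set $\preceq = \emptyset$.

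Next I would verify two claims by simple inspection of the inductive definition of ASPIC$^{+}$ arguments. First, because $\kb = \emptyset$, $\rulset_s = \emptyset$, and every defeasible rule has empty body, the only arguments generated in any rule-completion are the simple premiseless arguments $A_a := (\Rightarrow p_a)$ for $a$ such that $r_a$ is in the chosen rule set; in particular, $A_a$ exists in a rule-completion iff $r_a$ is included. Second, an attack between two such arguments can only be an undercut: undermining is ruled out since no argument has a premise, and rebuttal is ruled out because the contrary function never relates two $p_x$'s. An undercut of $A_b$ by $A_a$ requires $p_a = \conc(A_a) \in \overline{\namefun(r_b)} = \overline{n_b}$, which holds exactly when $\ang{a,b} \in \defrel$ by construction. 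With $\preceq = \emptyset$, every attack is a defeat.

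Combining these observations, the rule-completions of the constructed $\rulprefix\nisaf$ are in natural bijection with subsets $S \subseteq \args^{?}$, and the rule-completion associated to $S$ has underlying AF equal (up to renaming) to the completion $\ang{\args^\fix \cup S, \defrel_{\restrict \args^\fix \cup S}}$ of $\nargiaf$. The map $i : A_a \mapsto a$ is a bijection from $\bigcup_{\af \in \compfun(\rulprefix\nisaf)} \args(\af)$ onto $\args^\fix \cup \args^?$ and witnesses $\compfun(\rulprefix\nisaf) \approxeq \compfun(\nargiaf)$ in the sense of Definition \ref{def:expressivity}. The most delicate point, and the one I would spell out carefully, is the exhaustive check in the preceding paragraph that no spurious arguments or attacks slip into the associated AF; once that is established, the bijection and the expressivity inequality are immediate.
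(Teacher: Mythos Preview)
Your proposal is correct and complete; the verification that the only arguments are the $A_a$ and that the only attacks are undercuts is exactly the right thing to check, and your contrary function is set up so that rebuttals cannot arise and every formula still has a contradictory.

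The paper's proof takes a slightly different route. Instead of encoding all abstract arguments uniformly as premiseless defeasible-rule arguments and realising defeats as undercuts, the paper maps certain arguments $x\in\args^{\fix}$ to ordinary premises $p_x\in\kb_p$ and uncertain arguments $x\in\args^{?}$ to premiseless defeasible rules $\ang{\{\},p_x}\in\rulset_d^{?}$; defeats are then captured by declaring $p_x\in\overline{p_y}$ whenever $\ang{x,y}\in\defrel$, so that they materialise as underminings or rebuttals (depending on whether the target is a premise or a rule argument), with the empty preference order ensuring every attack is a defeat. Your construction is arguably cleaner: by using only premiseless rules you get a single argument shape, and by routing attacks through the naming function you land on undercuts, which are defeats unconditionally and so sidestep any discussion of the preference condition. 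The paper's construction, on the other hand, avoids introducing the auxiliary name atoms $n_a$ and leaves $\namefun$ empty, at the cost of having to treat fixed and uncertain arguments asymmetrically and to reason by cases (undermining vs.\ rebuttal) when matching defeats.
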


\begin{proof}
    Let $\nargiaf=\ang{\args^\fix,\args^?,\defrel}$ be an arg-IAF, 
    we construct the target $\rulprefix\nisaf=\unravelsaf$ as follows:

    \begin{itemize}
        \item $\lanset=\{p_{x},\lnot p_{x} \mid x \in \args^\fix\cup \args^?\}$.
        \item  $\overline{\cdot}$ is given by two clauses:
        \begin{itemize}
            \item[(C1)] $p \in \overline{\lnot p}$ and $\lnot p \in \overline{p}$ for every $p,\lnot p \in \lanset$.\footnote{This clause is only included to make sure that each formula has at least one contradictory, as required in Definition \ref{def:aspic-theory}.}
            \item[(C2)]  $p_{x} \in \overline{p_{y}}$ iff $\ang{x,y}\in \defrel$.
        \end{itemize}
         \item $\rulset_d^?=\{\langle \{\}, p_{x}\rangle \mid x \in  \args^?\}$.\footnote{Recall that there can be rules with empty antecedent/body (Remark \ref{remark:premise-less-rules}).}
            \item $\kb_p=\{p_{x} \mid x \in \args^\fix\}$.
        \item The other components of $\rulprefix\nisaf$ are empty.
    \end{itemize}
Observe that the set of all arguments that might appear in the completions of $\rulprefix\nisaf$ is $\kb_p\cup\rulset_d^?$. More complex arguments cannot be formed because all rules of the defined argumentation system have empty antecedents. The function $i:(\args^\fix \cup \args^?)\to (\kb^\fix_p\cup\rulset_d^?)$ that establishes the equivalence between the two sets of completions (Def.\ \ref{def:expressivity}) is just 
$$
    i(x)= \left\{\begin{array}{lr}
        p_x, & \text{if } x \in \args^\fix\\
        & \\
        \Rightarrow p_x, & \text{if } x \in \args^?\\
        \end{array}\right.
$$

It is immediate to check that $i$ is bijective, since $i[\args^\fix]=\kb_p^\fix$, and $i[\args^?]=\rulset_d^?$.

We then show that 
\begin{center}
    $\{\ang{i[\args^\ast],i[\defrel^\ast]}\mid \ang{\args^\ast,\defrel^\ast}\in \compfun(\argprefix\niaf)\}=\compfun(\rulprefix\nisaf)$.
\end{center}

Which amounts to showing two things:
\begin{itemize}
    \item[(T1)]\label{t1} For every $\comp \in \compfun(\argprefix\niaf)$, $\ang{i[\args^\ast],i[\defrel^\ast]}\in \compfun(\rulprefix\nisaf)$; and
    \item[(T2)] For every  $\ang{\args',\defrel'} \in \compfun(\rulprefix\nisaf)$, there is a $\comp \in \compfun(\argprefix\niaf)$ such that $\ang{i[\args^\ast],i[\defrel^\ast]}=\ang{\args',\defrel'}$.
\end{itemize}

For (T1), suppose that $\comp \in \compfun(\argprefix\niaf)$. This implies, by definition of completion for arg-IAFs, that
\begin{itemize}
    \item[(a)]\label{a} $\args^\fix \subseteq \args^\ast \subseteq \args^\fix \cup \args^?$; and
    \item[(b)]\label{b} $\defrel^\ast=\defrel_{\restrict \args^\ast}$.
\end{itemize}

We need to show two things. 

\begin{itemize}
    \item[(1)]\label{1} $\kb^\fix_p \subseteq i[\args^\ast] \subseteq \kb_p^\fix \cup \rulset^?_d$; and
     \item[(2)]\label{2} $i[\defrel^\ast]=\defrel(\rulprefix\nisaf)_{\restrict i[\args^\ast]}$.
\end{itemize}

Note that (1) is implied by (a) and the fact that $i$ is monotonic, i.e., the fact that for all $X,Y \subseteq \args^\fix\cup\args^?$, we have that $X\subseteq Y$ implies $i[X]\subseteq i[Y]$. We leave details for the reader. \par

As for (2), let us show both inclusions. From left to right, suppose $\ang{i(x),i(y)}\in i[\defrel^\ast]$, which implies $\ang{x,y}\in \defrel^\ast$ (by definition of $i$),  and then by (b) that $\ang{x,y}\in \defrel$ and $x,y \in \args^\ast$ (and hence $i(x),i(y) \in i[\args^\ast]$). If $\ang{x,y}\in \defrel$, then we have by (C2) that $p_x \in \overline{p_y}$. Reasoning by cases on the membership of $i(x)$ and $i(y)$ to $\kb^\fix_p$ and $\rulset_d^?$, and applying the ASPIC$^{+}$definition of defeat, we can arrive to $\ang{i(x),i(y)}\in \defrel(\rulprefix\nisaf)$. Therefore, $\ang{i(x),i(y)}\in \defrel(\rulprefix\nisaf)_{\restrict i[\args^\ast]}$. From right to left, suppose that $\ang{i(x),i(y)}\in \defrel(\rulprefix\nisaf)_{\restrict i[\args^\ast]}$, which implies that $x,y \in \args^\ast$ (by the definition of $i$). Since $\ang{i(x),i(y)}\in \defrel(\rulprefix\nisaf)$, we can arrive, reasoning by cases on the definition of defeat to $p_x \in \overline{p_y}$ (note that there are no undercuts because $\namefun$ is empty). The latter implies by (C2) that $\ang{x,y}\in \defrel$, which together with $x,y \in \args^\ast$ (that we already knew) and (b) implies $\ang{x,y}\in \defrel^\ast$, which implies $\ang{i(x),i(y)}\in i[\defrel^\ast]$ (by definition of $i$).
\par \medskip

As for (T2), let $\ang{\args',\defrel'} \in \compfun(\rulprefix\nisaf)$. We need to find a completion of $\argprefix\niaf$ whose $i$-image is $\ang{\args',\defrel'}$. Let us simply show that $\ang{i^{-1}[\args'],i^{-1}[\defrel']}$ is indeed a completion of $\argprefix\niaf$. This amounts to showing:

\begin{itemize}
    \item[(T3)] $\args^\fix \subseteq i^{-1}[\args']\subseteq \args^\fix \cup \args^?$; and
    \item[(T4)] $i^{-1}[\defrel']=\defrel_{\restrict i^{-1}[\args']}$.
\end{itemize}
For the first inclusion of (T3), suppose that $x \in \args^\fix$, which implies $p_x\in \kb_p^\fix$ (by definition of $\rulprefix\nisaf$), which implies $p_x \in \args'$ (by the definition of ASPIC$^{+}$ arguments and the definition of completions for rul-ISAFs),  which implies by definition of $i$ that $i^{-1}(p_x)=x\in i^{-1}[\args']$. For the second inclusion, suppose that $x \in i^{-1}[\args']$. We continue by cases on the value of $i(x)$. If $i(x)=p_x$, then $x \in \args^\fix$ (by definition of $i$).  If $i(x)=\Rightarrow p_x$, then $x \in \args^?$ (by definition of $i$). Hence $x \in \args^\fix \cup \args^?$. \par 

For the left-to-right inclusion of (T4), suppose that $\ang{x,y}\in i^{-1}[\defrel']$. Reasoning by cases on the membership of $i(x)$ and $i(y)$ to $\kb_p^\fix \cup \rulset_d^?$ and applying the definition of $\rulprefix\nisaf$ we can arrive to $p_x \in \overline{p_y}$, which implies by (C2) that $\ang{x,y}\in \defrel$ (it is immediate to see that $x,y \in i^{-1}[\args']$). The right-to-left inclusion follows an analogous line of reasoning in the opposite direction.

\end{proof}
The naturally ensuing question is whether the converse holds, i.e. if arg-IAFs are at least as expressive as rul-ISAFs. The answer is negative. 

\begin{theorem}\label{prop:negative-arg-IAFs}
 $\text{rul-ISAFs} \not \lexpressive \text{arg-IAFs}$.
\end{theorem}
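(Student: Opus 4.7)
The plan is to prove strict non-containment by exhibiting a concrete rul-ISAF whose set of abstract completions cannot be reproduced by any arg-IAF, using a simple cardinality argument.

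First, I would construct $\rulprefix\nisaf^\star = \unravelsaf$ with two uncertain defeasible rules arranged in a chain: take $\lanset$ to contain atoms $p, q, r$ together with their contradictories, with $\negfun$ given by the standard pairing (so $p \in \overline{\lnot p}$ and $\lnot p \in \overline{p}$, etc., and analogously for $q$ and $r$); set $\rulset^{\fix} = \rulset^{?}_s = \emptyset$ and $\rulset^{?}_d = \{\ang{\{p\}, q}, \ang{\{q\}, r}\}$; let $\kb_n = \emptyset$, $\kb_p = \{p\}$; and leave $\namefun, \preceq$ empty. Of the four rule-completions, the empty subset and the singleton $\{\ang{\{q\}, r}\}$ both yield argument set $\{p\}$, since $q$ is not derivable from $\kb$ without the first rule; the singleton $\{\ang{\{p\}, q}\}$ yields $\{p,\, p \Rightarrow q\}$; and using both rules yields $\{p,\, p \Rightarrow q,\, p \Rightarrow q \Rightarrow r\}$. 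Since no rule concludes a contradictory of any premise or defeasible rule name, no attacks arise in any completion. Consequently $|\compfun(\rulprefix\nisaf^\star)| = 3$.

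Second, I would show that no arg-IAF can match this set of completions. For any arg-IAF $\nargiaf = \ang{\args^{\fix}, \args^{?}, \defrel}$, distinct sets $\args^\ast$ with $\args^{\fix} \subseteq \args^\ast \subseteq \args^{\fix} \cup \args^{?}$ yield completions with distinct argument sets, hence distinct AFs; therefore $|\compfun(\nargiaf)| = 2^{|\args^{?}|}$. Any equivalence $\compfun(\nargiaf) \approxeq \compfun(\rulprefix\nisaf^\star)$ is witnessed by a bijection on arguments that induces a bijection between the two completion sets, so it forces $2^{|\args^{?}|} = 3$, which is impossible for any finite or infinite cardinality of $\args^{?}$.

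The only substantive design choice — and the step that carries the proof — is forcing two distinct rule-completions to collapse onto the same abstract completion. Here this is achieved by the chain structure that renders the second rule inapplicable when the first is absent. This collapsing phenomenon expresses a structural correlation between arguments (one argument's presence conditional on another's) which plain arg-IAFs cannot capture, as they treat uncertain arguments as independent atomic choices; the counting obstruction above makes this observation formally sharp.
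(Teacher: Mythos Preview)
Your proof is correct and rests on the same chain-dependency idea as the paper's, but the execution differs in a useful way. The paper makes $\ang{\{q\},r}$ a \emph{certain} rule and $\ang{\{p\},q}$ the sole uncertain one, obtaining exactly two completions $\ang{\{p\},\emptyset}$ and $\ang{\{p,\, p\Rightarrow q,\, p\Rightarrow q\Rightarrow r\},\emptyset}$; it then argues that any equivalent arg-IAF would have to place (the images of) both $p\Rightarrow q$ and $p\Rightarrow q\Rightarrow r$ in $\args^{?}$ and hence produce a completion containing the first but not the second, which is absent. You instead make both rules uncertain, obtain three distinct abstract completions via the collapse of two rule-completions, and finish with a pure cardinality obstruction ($|\compfun(\nargiaf)| = 2^{|\args^{?}|}$ can never equal $3$). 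Your variant is a touch slicker, since the counting argument sidesteps any case analysis about which arguments must be uncertain and which specific completion is missing; the paper's version, on the other hand, makes the structural failure (independence of uncertain arguments in arg-IAFs forcing a forbidden completion) more visible, which dovetails with the subsequent move to implicative dependencies.
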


\begin{proof}
 A simple rul-ISAF satisfying the statement is obtained by setting $\rulset^{F}_s=\rulset_s^{?}=\emptyset$, $\rulset^{F}_d=\{\ang{\{q\},r}\}$, and $\rulset_{d}^{?}=\{\ang{\{p\},q}\}$ (where $\lanset$ is the language of propositional logic and $\overline{\cdot}$ is given by classical negation), $\kb_{n}=\emptyset$, $\kb_{p}=\{p\}$, and $\preceq=\emptyset$. There are then two rule-completions and two associated AFs (two completions), namely, $\ang{\{p\},\emptyset}$ and $\ang{\{p, p\Rightarrow q, p \Rightarrow q \Rightarrow r \}, \emptyset}$. It is easy to show that no IAF has a set of completions equivalent to those AFs. For suppose there was such an IAF, then (the arguments corresponding to) $p\Rightarrow q$ and $p \Rightarrow q \Rightarrow r$ must belong to $\args^?$, because they don't appear in all completions, and therefore they cannot be certain. But then there should be a completion with $p\Rightarrow q$ and without $p \Rightarrow q \Rightarrow r$, but this is not the case (contradiction!).

\end{proof}

We can nonetheless achieve a positive expressivity result by extending arg-IAFs to their implicative extension of arg-IAFs presented in Section \ref{subsec:arg-IAFs}.\footnote{The following theorem corrects Theorem 1 of \cite{ai32023}, where something similar is claimed for a subclass of imp-arg-IAF that only admits singleton sets on the left-hand side of implicative dependencies.}

\begin{theorem}\label{thm:rul-iafs-representation} 
$\text{rul-ISAFs }\lexpressive \text{ imp-arg-IAFs}$.
\end{theorem}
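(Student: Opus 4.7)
The plan is, given an arbitrary $\rulprefix\nisaf$, to construct an imp-arg-IAF whose abstract completions coincide (via the identity bijection $i$) with $\compfun(\rulprefix\nisaf)$. The guiding observation is Proposition \ref{prop:rul-ISAFs-arg-incompleteness}: a rul-ISAF only generates argument-incompleteness at the abstract level, and the argument set of an abstract completion is fully determined by the subset $R \subseteq \rulset^{?}$ of uncertain rules that have been ``activated''. The imp-arg-IAF should faithfully encode these rule-activation constraints using only dependencies between arguments.

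Concretely, I would take $\args^{\fix} := \args(\nsaf^{\fix})$ (arguments constructible using only certain rules), $\args^{?} := \args(\nsaf^{max}) \setminus \args(\nsaf^{\fix})$ (arguments employing at least one uncertain rule), and $\defrel := \defrel(\nsaf^{max})$ restricted to $\args^{\fix} \cup \args^{?}$. For the dependency set $\Delta$, I would include $\opimp(\{A_1, \dots, A_k\}, B)$ for every $B \in \args^{?}$ and every finite family $\{A_1, \dots, A_k\} \subseteq \args^{?}$ such that $\rulset^{?}(B) \subseteq \bigcup_{i=1}^{k} \rulset^{?}(A_i)$. Intuitively, once the arguments in a completion collectively ``witness'' all uncertain rules required by $B$, the dependency forces $B$ to appear as well. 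Since $B \in \args^{?}$ guarantees $\rulset^{?}(B) \neq \emptyset$, at least one $A_i$ is needed, so the antecedent is always a nonempty subset of $\args^{?}$, as required.

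For the equivalence, I would argue in two directions. Forward: given a rule-completion $\nsaf^R$, its associated AF has argument set $\args^{\fix} \cup \{B \in \args^{?} : \rulset^{?}(B) \subseteq R\}$; this set satisfies every dependency in $\Delta$ because if all $A_i$ in the antecedent are present then $\bigcup \rulset^{?}(A_i) \subseteq R$, whence $\rulset^{?}(B) \subseteq R$ and thus $B$ belongs to the set. Backward: given a completion $\args^{\fix} \cup S$ of the imp-arg-IAF, define $R(S) := \bigcup_{A \in S} \rulset^{?}(A)$ and show $\args^{\fix} \cup S$ equals the AF associated to $\nsaf^{R(S)}$. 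The key step is proving $\{B \in \args^{?} : \rulset^{?}(B) \subseteq R(S)\} \subseteq S$: since ASPIC$^{+}$ arguments are finite, $\rulset^{?}(B)$ is finite, so one can choose a finite cover $\{A_1, \dots, A_k\} \subseteq S$ with $\rulset^{?}(B) \subseteq \bigcup \rulset^{?}(A_i)$, which triggers the dependency $\opimp(\{A_1, \dots, A_k\}, B) \in \Delta$ and forces $B \in S$.

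The main obstacle is exactly this backward inclusion, where one must ensure that $\Delta$ is rich enough to exclude any ``phantom'' completion not arising from a rule-completion of the original rul-ISAF. The crucial ingredient is the finiteness of each $\rulset^{?}(B)$, which guarantees that a single finite implicative dependency suffices to close $S$ under rule-coverage; without this, one would need infinitary antecedents, which are syntactically allowed but would complicate matters. A secondary point is verifying that the defeat relations line up: this follows from Proposition \ref{prop:rul-ISAFs-arg-incompleteness}, since whenever $A,B$ both appear in two rule-completions they defeat the same arguments, so the restriction of $\defrel(\nsaf^{max})$ to the completion's arguments equals the defeat relation of the corresponding $\nsaf^{R(S)}$.
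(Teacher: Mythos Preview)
Your proposal is correct and follows essentially the same construction as the paper: the same $\args^{\fix}$, $\args^{?}$, $\defrel$, and the same criterion $\rulset^{?}(B)\subseteq\rulset^{?}(\Gamma)$ for populating $\Delta$, with the two inclusions between the completion sets argued in the same way (Lemma~\ref{lemmaThm1} plays the role of your forward direction and your $R(S)$ is exactly the paper's $\rulset'=\rulset^{\fix}\cup\rulset^{?}(\args^{\ast})$). The only noteworthy difference is that you restrict the antecedents in $\Delta$ to \emph{finite} families and justify this via the finiteness of $\rulset^{?}(B)$ for ASPIC$^{+}$ arguments, whereas the paper simply allows arbitrary $\Gamma\subseteq\args^{?}$ and, in the backward step, invokes the single dependency $\opimp(\args^{\ast}\cap\args^{?},X)$; your version is a mild refinement (it shows finitary dependencies already suffice) but not a different route.
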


\begin{proof}
Let $\rulprefix\nisaf=\unravelsaf$ be a rule-incomplete structured AF with $\rulset=\rulset^{\fix}\cup \rulset^{?}$.

Now, we build the target imp-arg-IAF $\ang{\args^{\fix},\args^{?},\defrel,\Delta}$ as follows:

\begin{itemize}
\item $\args^{\fix}=\args(\nsaf^{\fix})$.

\item $\args^{?}= \args(\nsaf^{max})\setminus\args^{\fix}$.	

\item $\defrel=\defrel(\nsaf^{max})$.

\item $\Delta=\{\opimp(\Gamma,X) \mid \rulset^{?}(X)\subseteq \rulset^{?}(\Gamma)\}$.\footnote{Recall that $\Gamma \subseteq \args^?$ (non-empty) and $X \in \args^?$.}
\end{itemize}

Any injective function $i: \args(\nsaf^{max})\to \uni$ will do the job here for proving equivalence among the two sets of completions, so we just identify each $X \in \nsaf^{max}$ with its abstract name $i(X)$. Then, we need to show that both directions of the equality  $\compfun(\rulprefix\nisaf)=\compfun(\ang{\args^{\fix},\args^{?},\defrel,\Delta})$ hold:

\par \smallskip

\noindent [$\subseteq$] Suppose $\comp \in \compfun(\rulprefix\nisaf)$, which amounts by definition of $\compfun$ to

\begin{itemize}
\item[(H0)] $\comp=\daf(\nsaf^{\ast})$ for some $\nsaf^{\ast}\in \rulprefix\compfun(\nisaf)$.
\end{itemize}  Hence, we need to check that $\comp$ satisfies the conditions for being a completion of $\ang{\args^{\fix},\args^{?},\defrel,\Delta}$ (Section \ref{subsec:arg-IAFs}). To do so, we need to establish the following claims, whose proof we omit:

\begin{lemma}\label{lemmaThm1}
Let $\nsaf,\nsaf^{\prime}\in \rulprefix\compfun(\nisaf)$. Then:

\begin{enumerate}
\item\label{lemmaThm1.a} $\rulset(\nsaf)\subseteq \rulset(\nsaf^{\prime})$ implies $\args(\nsaf)\subseteq \args(\nsaf^{\prime})$.
\item\label{lemmaThm1.b} $\rulset(\nsaf)\subseteq \rulset(\nsaf^{\prime})$ implies $\defrel(\nsaf)= \defrel(\nsaf^{\prime})_{\restrict \args(\nsaf)}$.
\item For every $\Gamma\subseteq \args(\nsaf^{max})$ and every $X \in \args(\nsaf^{\max})$ we have that:

\begin{center}
If $\Gamma\subseteq \args(\nsaf)$ and $\rulset^{?}(X)\subseteq \rulset^{?}(\Gamma)$, then $X \in \args(\nsaf)$.
\end{center}
\end{enumerate}
\end{lemma}

Each of the conditions follows from (H0) and its corresponding item of the previous lemma.
In particular, the condition $\args^{\fix}\subseteq \args^{\ast}\subseteq \args^{\fix}\cup \args^{?}$ follows by verifying that item 1 of Lemma \ref{lemmaThm1} entails that $\args(\nsaf^{\fix})\subseteq \args^{\ast}\subseteq \args(\nsaf^{max})$. In the same way, item 2 of Lemma \ref{lemmaThm1} entails $\defrel^{\ast}=\defrel(\nsaf^{max})_{\restrict  \args^{\ast}}$. Finally, item 3 ensures that for all the dependencies listed in the definition, $\Gamma\subseteq \args^\ast$ then $X \in \args^\ast$.

\par \smallskip

[$\supseteq$]\footnote{This direction did not hold in \cite{ai32023}.} Suppose $\comp \in \compfun(\imparg)$, we have that:

\begin{itemize}

\item[(H1)] $\args^{\fix}\subseteq \args^{\ast}\subseteq \args^{\fix}\cup \args^{?}$.
\item[(H2)] $\defrel^{\ast}=\defrel_{\restrict \args^{\ast}}$.
\item[(H3)] For all $\opimp(\Gamma,X)\in \Delta$ if $\Gamma\subseteq \args^{\ast}$, then $X\in \args^{\ast}$.

\end{itemize}

Now, we need to find a rule-completion of $\nisaf$, $\nsaf^{\prime}$ s.t.\ $\daf(\nsaf^{\prime})=\comp$. We define $\nsaf^{\prime}=\unravelsafprime$ where all components are as in the original $\rulprefix\nisaf=\unravelsaf$ except for
\begin{center}
$\rulset^{\prime}=\rulset^{\fix}\cup \rulset^{?}(\args^{\ast})$.
\end{center}

We need to check three things about this SAF, namely:

\begin{enumerate}
\item $\nsaf^{\prime}\in \rulprefix(\nisaf)$.
\item $\args^{\ast}=\args(\nsaf^{\prime})$.
\item $\rel^{\ast}=\rel(\nsaf^{\prime}$).
\end{enumerate}

For 1, we need to show $\rulset^{\fix}\subseteq \rulset^{\prime}\subseteq (\rulset^{\fix}\cup \rulset^{?})$. The first inclusion holds by definition of $\rulset^{\prime}$. For the second inclusion, suppose that $r \in \rulset^{\prime}$, which implies $r \in \rulset^{\fix}\cup \rulset^{?}(\args^{\ast})$ (by definition of $\rulset^{\prime}$), which implies $r \in \rulset^{\fix}\cup \rulset^{?}$ (by (H1) and definition of $\args^{?}$). \par \medskip

For 2, we show that both inclusions hold. [$\subseteq$] Suppose $X \in \args^{\ast}$, which implies by (H1) that $X \in \args^{\fix}\cup \args^{?}$. We continue by cases. In the case that $X \in \args^{\fix}$, we have that $X \in \args(\nsaf^{\fix})$ (by definition of $\args^{\fix}$), which implies $X \in \args(\nsaf^{\prime})$ (by definition of $\rulset^{\prime}$ and Lemma \ref{lemmaThm1}.\ref{lemmaThm1.a}). In the case that $X \in \args^{?}$, we have that $\rulset(X)\subseteq \rulset(\nsaf^{\prime})$ (by definition of $\rulset^{\prime}$), which implies $X \in \args(\nsaf^{\prime})$ (by definition of ASPIC$^{+}$-argument). [$\supseteq$]\footnote{This inclusion was unprovable in \cite{ai32023}.} Suppose $X \in \args(\nsaf^{\prime})$, this entails (by definition of $\nsaf^{\prime}$) that $\rulset(X)\subseteq \rulset^{\fix}\cup \rulset^{?}(\args^{\ast})$, which implies (by definition of $\rulset^{?}$) $\rulset^{?}(X)\subseteq \rulset^{?}(\args^{\ast})$. From this it follows (by definition of $\Delta$) that $\ang{\args^{\ast},X}\in \Delta$, which in turn entails (by definition of completion of imp-arg-IAFs) that $X \in \args^{\ast}$).

\par \medskip

For 3,  we show that both inclusions hold. [$\subseteq$] Suppose $\ang{X,Y} \in \rel^{\ast}$, which implies (by definition completion for imp-arg-IAFs) that $\langle X,Y \rangle \in \args^{\ast}\times \args^{\ast}$ and $\ang{X,Y} \in \rel$. This entails (by 2. and definition of $\rel$, respectively) that $\ang{X,Y}\in \args(\nsaf')\times \args(\nsaf')$ and $\ang{X,Y} \in \rel(\nsaf^{\max})$. The latter entails, by Lemma \ref{lemmaThm1}.\ref{lemmaThm1.b} and by the fact that $\rulset(\nsaf')\subseteq\rulset(\nsaf^{max})$ that $\ang{X,Y}\in \rel(\nsaf')$. [$\supseteq$] Suppose that $\ang{X,Y}\in \rel(\nsaf')$. This implies (by 2. and Lemma \ref{lemmaThm1}.\ref{lemmaThm1.b}, respectively) that  $X,Y \in \args^{\ast}$ and $\ang{X,Y}\in \nsaf^{max}$, which entails (by definition of $\rel$) that  $X,Y \in \args^{\ast}$ and $\ang{X,Y}\in \rel$. It follows that $\ang{X,Y}\in \rel^{\ast}$ (by definition of completion for imp-arg-IAFs).
\end{proof}

Our next result establishes that imp-arg-IAFs are in fact strictly more expressive than rul-ISAFs. Its proof relies on the following lemma:

\begin{lemma}\label{lemma:caract-rul-completions} Let $\rulprefix\nisaf=\unravelsaf$ be a rul-ISAF, and let $\Gamma \subseteq \args(\nsaf^{max})$, $X \in \args(\nsaf^{max})$ we have that:

\begin{center}
    for every $\comp \in \compfun(\rulprefix\nisaf)$, $\Gamma\subseteq \args^\ast$ implies $X \in \args^\ast$ \\
    iff \\
    $\rulset^?(X)\subseteq \rulset^?(\Gamma)$.
\end{center}    
\end{lemma}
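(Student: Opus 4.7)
The plan is to prove the biconditional by relying on a simple structural fact about ASPIC$^+$ arguments: since the knowledge base $\kb$ is identical in all rule-completions of $\rulprefix\nisaf$, an argument $Y \in \args(\nsaf^{max})$ belongs to a rule-completion $\nsaf^\ast$ if and only if every rule it uses is available, i.e.\ $\rulset(Y) \subseteq \rulset(\nsaf^\ast)$. This follows straightforwardly from the inductive definition of arguments and from the observation that premises of $Y$ lie in $\kb$ (which is shared across completions). Splitting $\rulset(Y)$ into its certain and uncertain parts, and noting that $\rulset^\fix \subseteq \rulset(\nsaf^\ast)$ always holds, membership of $Y$ in $\args^\ast$ reduces to $\rulset^?(Y) \subseteq \rulset^?(\nsaf^\ast)$.

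For the right-to-left direction, I would assume $\rulset^?(X) \subseteq \rulset^?(\Gamma)$ and pick an arbitrary $\comp \in \compfun(\rulprefix\nisaf)$ with $\Gamma \subseteq \args^\ast$. Since every $Y \in \Gamma$ lies in $\args^\ast$, each rule in $\rulset^?(Y)$ must be in $\rulset(\nsaf^\ast)$, giving $\rulset^?(\Gamma) \subseteq \rulset(\nsaf^\ast)$. Combined with the assumption we obtain $\rulset^?(X) \subseteq \rulset(\nsaf^\ast)$, and together with $\rulset^\fix(X) \subseteq \rulset^\fix \subseteq \rulset(\nsaf^\ast)$ this yields $\rulset(X) \subseteq \rulset(\nsaf^\ast)$, hence $X \in \args^\ast$ by the structural fact above.

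For the left-to-right direction I would proceed by contrapositive: assume there exists $r \in \rulset^?(X) \setminus \rulset^?(\Gamma)$ and exhibit a rule-completion $\nsaf^\ast$ that contains $\Gamma$ but not $X$. The natural candidate is the SAF whose rule set is $\rulset^\fix \cup \rulset^?(\Gamma)$ (respecting the strict/defeasible split inherited from $\rulprefix\nisaf$), with all other components inherited from $\rulprefix\nisaf$ modulo the restriction of $\preceq$ and $\namefun$. This is a legitimate rule-completion by Definition \ref{def:rul-isaf-completions} since $\rulset^\fix \subseteq \rulset^\fix \cup \rulset^?(\Gamma) \subseteq \rulset^\fix \cup \rulset^?$. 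Every $Y \in \Gamma$ satisfies $\rulset(Y) \subseteq \rulset^\fix \cup \rulset^?(\Gamma)$ by construction, so $\Gamma \subseteq \args(\nsaf^\ast)$; however $r \in \rulset(X)$ is not in $\rulset(\nsaf^\ast)$, so $X \notin \args(\nsaf^\ast)$, as required.

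The main obstacle is really the structural characterisation underlying both directions, namely showing cleanly that $Y \in \args(\nsaf^\ast) \iff \rulset(Y) \subseteq \rulset(\nsaf^\ast)$; once that is granted, both directions are essentially direct. One subtlety to watch is to verify that the constructed $\nsaf^\ast$ in the contrapositive argument really is a well-formed rule-completion — in particular, that the strict/defeasible partition of $\rulset^?(\Gamma)$ is consistent with that of $\rulset$, which is immediate since $\rulset^?(\Gamma) \subseteq \rulset^?$ and the partition is inherited.
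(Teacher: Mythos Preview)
Your proposal is correct and follows essentially the same approach as the paper: both directions hinge on the structural fact that membership of an argument in a rule-completion is controlled by the availability of its rules, and the left-to-right direction is handled (by contrapositive/contradiction) via the very same witnessing rule-completion with rule set $\rulset^\fix \cup \rulset^?(\Gamma)$. Your write-up is in fact slightly more explicit about the underlying characterisation $Y \in \args(\nsaf^\ast) \iff \rulset(Y) \subseteq \rulset(\nsaf^\ast)$ and about the strict/defeasible split, but the argument is the same.
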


\begin{proof}
    Let $\rulprefix\nisaf=\unravelsaf$ be a rul-ISAF, and let $\Gamma \subseteq \args(\nsaf^{max})$, $X \in \args(\nsaf^{max})$. \par 

    From left to right, suppose that \\
    \centerline{ for every $\comp \in \compfun(\rulprefix\nisaf)$, $\Gamma\subseteq \args^\ast$ implies $X \in \args^\ast$ \quad (1)}\\
    Suppose also, reasoning towards contradiction, that \\
    \centerline{$r \in \rulset^?(X)$, but $r \notin \rulset^?(\Gamma)$ \quad (2) } \\
    Now, let $\nsaf^\Gamma=\ang{\lanset, \contfun,\rulset^\fix\cup\rulset^?(\Gamma),\namefun^\ast,\kb,\preceq^\ast}$, which is a rule-completion of $\rulprefix\nisaf$. It is clear that $\Gamma \subseteq \args(\nsaf^\Gamma)$. However, since $r \notin \rulset(\nsaf^\Gamma$) (because of (2)), we have that $X \notin \args(\nsaf^\Gamma)$, which contradicts (1). \par \medskip

    From right to left, suppose that 

    \centerline{$\rulset^?(X)\subseteq \rulset^?(\Gamma)$ \quad (1)}

    Let $\comp \in \compfun(\rulprefix\nisaf)$ (which means by definition of $\compfun$ that there is a rule-completion, name it $\nsaf^\ast$, such that $\naf(\nsaf^\ast)=\comp$).

    Suppose that $\Gamma \subseteq \args^\ast$ (i.e., $\Gamma \subseteq \args(\nsaf^\ast$)). This implies, by definition of ASPIC-argument, that $\rulset^?(\Gamma)\subseteq \rulset(\nsaf^\ast)$, which implies by (1), that $\rulset^?(X)\subseteq \rulset(\nsaf^\ast)$, which again by definition of argument implies $X \in  \args(\nsaf^\ast)$ (i.e., $X \in \args^\ast$).

\end{proof}
\black

This allows us to prove the following:
\begin{theorem}\label{prop:impargiafs-not-lessexpresive-rulisafs}
   imp-arg-IAFs $\not \lexpressive$ rul-ISAFs
\end{theorem}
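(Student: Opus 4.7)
The plan is to exhibit an explicit imp-arg-IAF whose completion set cannot be realised by any rul-ISAF. The witness I would use is
\[
I^{*}=\ang{\emptyset,\{a,b,c\},\emptyset,\{\opimp(\{a,b\},c)\}},
\]
whose completions are exactly the seven subsets of $\{a,b,c\}$ other than $\{a,b\}$ (each paired with the empty defeat relation). The driving intuition is that a structured argument syntactically contains each of its subarguments, forcing rule-set containments among the arguments of any rul-ISAF, whereas the abstract framework $I^{*}$ imposes no such containment.

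Suppose for contradiction that some rul-ISAF $\rulprefix\nisaf$ represents $I^{*}$ through a bijection $i$; set $A=i(a)$, $B=i(b)$, $C=i(c)$. Since the union of arguments over all completions of $I^{*}$ is $\{a,b,c\}$, the equivalence of completion sets forces $\args(\nsaf^{max})=\{A,B,C\}$. I would then apply Lemma \ref{lemma:caract-rul-completions} to translate the completion structure of $I^{*}$ into rule-set relations: the absence of the completion $\{a,b\}$ yields $\rulset^?(C)\subseteq \rulset^?(A)\cup \rulset^?(B)$, while the presence of $\{a\},\{b\},\{c\}$ yields the four non-inclusions $\rulset^?(C)\not\subseteq \rulset^?(A)$, $\rulset^?(C)\not\subseteq \rulset^?(B)$, $\rulset^?(A)\not\subseteq \rulset^?(C)$, and $\rulset^?(B)\not\subseteq \rulset^?(C)$.

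The contradiction is then extracted by a case analysis on the syntactic form of $C$, relying only on the elementary fact that $S\in\sub(X)$ implies $\rulset^?(S)\subseteq \rulset^?(X)$ (immediate from the inductive definition of $\rulset(\cdot)$). If $C$'s top rule has non-empty body, then $C$ has at least one proper subargument $S$; since $S\in\args(\nsaf^{max})\setminus\{C\}=\{A,B\}$, one of $\rulset^?(A)\subseteq \rulset^?(C)$ or $\rulset^?(B)\subseteq \rulset^?(C)$ holds, contradicting a non-inclusion. Otherwise $\sub(C)=\{C\}$: either $C$ is a premise or $C=\;\Rightarrow_{r}\varphi$ for some rule $r$ of empty body, giving $\rulset^?(C)\subseteq\{r\}$. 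If $\rulset^?(C)=\emptyset$ (as happens when $C$ is a premise or $r$ is certain), this already contradicts $\rulset^?(C)\not\subseteq\rulset^?(A)$. If instead $\rulset^?(C)=\{r\}$ with $r$ uncertain, the inclusion into $\rulset^?(A)\cup\rulset^?(B)$ places $r$, say, in $\rulset^?(A)$. Then $r$ is the top rule of some subargument of $A$, and because a rule of empty body can only be the top rule of a unique argument ($\Rightarrow_{r}\varphi$), that subargument coincides with $C$. Hence $C\in\sub(A)$ and $\rulset^?(C)\subseteq \rulset^?(A)$, the final contradiction.

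The non-routine point I expect to need most care on is this last uniqueness step: that an ASPIC$^{+}$ argument whose top rule has empty body is uniquely determined. This follows directly from the third clause of the inductive definition of arguments (no freedom in selecting subarguments when the body is empty), but it is the genuinely ASPIC$^{+}$-specific ingredient of the proof. Everything else reduces to routine rule-set bookkeeping combined with the characterisation supplied by Lemma \ref{lemma:caract-rul-completions}.
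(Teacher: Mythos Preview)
Your proof is correct and follows essentially the same approach as the paper: both use the witness $\ang{\emptyset,\{a,b,c\},\emptyset,\{\opimp(\{a,b\},c)\}}$ and invoke Lemma~\ref{lemma:caract-rul-completions} to convert the completion structure into rule-set (non-)inclusions, then exploit the subargument structure of ASPIC$^{+}$ arguments to obtain a contradiction. Your case analysis on the syntactic form of $C$ is in fact more explicit than the paper's sketch, which compresses the argument into the claim that $i(c)$ must contain at least two premiseless rules whose associated simple arguments would have to accompany $i(c)$ in every completion.
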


\begin{proof}[Proof (sketched)]

Let $\mathsf{imp}\text{-}\argprefix\niaf=\imparg$ where $\args^F=\emptyset$, $\args^?=\{a,b,c\}$, $\defrel=\emptyset$ and $\Delta=\{\ang{\{a,b\},c}\}$, this imp-arg-IAFs can be shown not to have any rul-ISAF with an equivalent set of completions. First of all, note that such rul-ISAF should have no premises, otherwise $\args^F$ would not be empty. Now, reasoning towards contradiction, suppose that there is such a rul-ISAF, call it $\nisaf$, and call $i$ the function that witnesses the equivalence of Def.\ \ref{def:expressivity}. Then $i(c)\in \args(\nisaf)$ can be shown, using Lemma \ref{lemma:caract-rul-completions}, to have at least two premiseless rules, which are themselves members of all the completions where $i(c)$ appears. But then $\langle \{i(c)\},\emptyset\rangle$ is not a completion of $\nisaf$ (while it should by assumption!).
\end{proof}
\black
\subsection{Uncertain Premises}\label{subsec:uncertainpremises}
\newcommand{\premcomp}{\ang{\lanset, \negfun, \rulset, \namefun, \kb^\ast, \preceq^{\ast}}}

    A second option for grounding argument-incompleteness in the components of an ASPIC$^+$ structured argumentation framework is to consider sets of premises, i.e. knowledge bases, as possible sources of uncertainty. {Although \cite{baumeister2021acceptance} does not mention this possibility, the idea appears in the literature under a different terminology motivated by a practical application \cite{odekerken2023argumentative,odekerken2025argumentative}.}\footnote{To be precise, \cite{odekerken2023argumentative,odekerken2025argumentative} uses a fragment of ASPIC$^+$ where $\lanset$ only contains literals, $\namefun$ is empty, $\overline{\cdot}$ is symmetric, $\kb=\kb_a$ (all premises are axioms) and $\rulset=\rulset_d$ (all rules are defeasible). Moreover, they call a premise-completion a ``future argumentation framework''. However, the generalisation of their definitons as well as the equivalence with the current terminology is straightforward.}
\par 

\begin{definition}\label{def:prem-isaf}
    A \textbf{premise-incomplete structured argumentation framework} (prem-ISAF) is a tuple $\premprefix\nisaf=\unravelsaf$ where every component is exactly as in a SAF except from the knowledge base, that comes split into four disjoint subsets $\kb=\kb_a^\fix\cup \kb_a^?\cup \kb_p^\fix\cup \kb_p^?$ representing, respectively, certain axioms, uncertain axioms, certain ordinary premises and uncertain ordinary premises. As before, we define $\kb^\fix=\kb_a^\fix\cup \kb_p^\fix$ and similarly for $\kb^?$. 

\end{definition}
\par 
Completions are then defined in the same way as they were for rule incompleteness.
\begin{definition}\label{def:prem-isaf-completion}
    
A \textbf{premise-completion} of $\premprefix\nisaf=\unravelsaf$ is any SAF $\nsaf^\ast=\premcomp$ where:
\begin{itemize}
    \item $\kb^\ast=\kb_a^\ast\cup \kb_p^\ast$ with:
\begin{itemize}
    \item $\kb_a^\fix \subseteq \kb_a^\ast \subseteq (\kb_a^\fix \cup \kb_a^?)$; and
    \item $\kb_p^\fix \subseteq \kb_p^\ast \subseteq (\kb_p^\fix \cup \kb_p^?)$.
\end{itemize}
\item  $\preceq^\ast=\preceq_{\restrict \args^\ast}$ with $\args^\ast=\args(\ang{\lanset,\contfun,\rulset,\namefun,\kb^\ast})$.

\end{itemize}
We denote as $\premprefix\mathsf{completions}(\nisaf)$ the set of all premise-completions of $\premprefix\nisaf$. Here again, two premise-completions, the minimal and the maximal one, will be key for our proofs:
\begin{itemize}
\item $\nsaf^{\fix}$ is the premise-completion whose knowledge base is $\kb^{\fix}=\kb^{\fix}_a\cup \kb^{\fix}_p$.
\item $\nsaf^{max}$ is just $\premprefix\nisaf$ (i.e., the completion generated using certain and uncertain premises, i.e., all formulas in $\kb$).
\end{itemize}
Let $\nsaf \in \premprefix\compfun(\nisaf)$ and let $X \in \args(\nsaf)$, we define the set of uncertain (resp.\ certain) premises of argument $X$ as $\prem^{?}(X)=\prem(X)\cap \kb^{?}$ (resp.\ $\prem^{\fix}(X)=\prem(X)\cap \kb^{\fix}$). This function is lifted to sets of arguments by setting $\prem^?(\Gamma)=\bigcup_{Y \in \Gamma}\prem^?(Y)$ (and the same for $\fix$). 

\end{definition}

\par 
As before, we define the set of completions of $\premprefix\nisaf$ as the abstract AFs associated to its premise-completions.
\begin{definition}\label{def:prem-isaf-abstract-completions}
    The set of \textbf{completions} of a given $\premprefix\nisaf$ is defined as:

\begin{center}
    $\compfun(\premprefix\nisaf)=\{\daf(\nsaf^{\ast})\mid \nsaf^{\ast} \in \premprefix\mathsf{completions}(\nisaf)\}\text{.} $
\end{center}

\end{definition}

\begin{remark}
    The definition of prem-ISAFs and premise-completion assumes that it is known whether $\varphi$ belongs to $\kb_a$ or $\kb_p$ (i.e., whether $\varphi$ is an axiom or an ordinary premise) whenever $\varphi$ is indeed a premise. Hence, it is assumed that uncertainty only concerns the presence of premises, not their status.
\end{remark}

\begin{example}\label{example:prem-isaf} Let $\premprefix\nisaf_0=\unravelsaf$ where every component is defined just as in the SAF of Example \ref{example:saf} except for $\kb$, which is defined as follows:

\begin{itemize}
    \item $\kb^\fix_n=\{p,u\}$;
    \item $\kb^?_n=\emptyset$;
    \item $\kb^\fix_p=\{s\}$; and
     \item $\kb^?_p=\{w\}$.
\end{itemize}

This prem-ISAF has two premise-completions (i.e., one with premise $w$ and one without it) and two associated AFs:
\par \smallskip
\scalebox{1}{
\begin{tabular}{c|c}
  
\begin{tikzpicture}[->,>=stealth,shorten >=1pt,auto,node distance=1.4cm,
                thick,main node/.style={circle,draw,font=\bfseries},uncertain/.style={rectangle,draw,dashed,font=\bfseries}]

\node[sarg] (s) {$s$};
\node[sarg] (snr) [right=0.5cm of s] {$s\Rightarrow \lnot r$};

\node[sarg] (pq) [right=2cm of snr] {$p \Rightarrow q$};
\node[sarg] (p) [below=0.1cm  of pq] {$p$};

\node[sarg] (uns) [above of=s] {$u\sto \lnot s$};
\node[sarg] (u) [above=0.1cm of uns] {$u$};

\node[sarg] (wr) [below of=s] {$w\Rightarrow r$};
\node[sarg] (w) [below=0.1cm  of wr] {$w$};

\path[->] (snr) edge (pq);
\path[->] (uns) edge (snr);
\path[->] (uns) edge (s);
\path[->] (wr) edge (snr);
\end{tikzpicture}

     &

\begin{tikzpicture}[->,>=stealth,shorten >=1pt,auto,node distance=1.4cm,
                thick,main node/.style={circle,draw,font=\bfseries},uncertain/.style={rectangle,draw,dashed,font=\bfseries}]

\node[sarg] (s) {$s$};
\node[sarg] (snr) [right=0.5cm of s] {$s\Rightarrow \lnot r$};

\node[sarg] (pq) [right=2cm of snr] {$p \Rightarrow q$};
\node[sarg] (p) [below=0.1cm  of pq] {$p$};

\node[sarg] (uns) [above of=s] {$u\sto \lnot s$};
\node[sarg] (u) [above=0.1cm of uns] {$u$};

\node (wr) [below of=s] {};
\node (w) [below=0.1cm  of wr] {};

\path[->] (snr) edge (pq);
\path[->] (uns) edge (snr);
\path[->] (uns) edge (s);
\end{tikzpicture}

\end{tabular}
}
    
\end{example}

As with uncertain rules, we can show that uncertain knowledge bases only generate argument-incompleteness at the abstract level.

\begin{proposition}\label{prop:prem-ISAFs-arg-incompleteness}
   Let $\premprefix\nisaf$ be a prem-ISAF, and let $\ang{\args_1,\defrel_1},\ang{\args_2,\defrel_2}\in \compfun(\premprefix\nisaf)$. Then, for every $X,Y \in \args_1\cap \args_2$, $\ang{X,Y} \in \defrel_1$ iff $\ang{X,Y}\in \defrel_2$.
\end{proposition}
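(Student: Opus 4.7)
The plan is to mirror, almost verbatim, the strategy used in the proof of Proposition \ref{prop:rul-ISAFs-arg-incompleteness}. The key observation is that the defeat relation of an ASPIC$^+$ framework, restricted to a pair of arguments $X,Y$, only depends on ingredients that are either shared across all premise-completions (namely $\lanset$, $\overline{\cdot}$, $\rulset$, $\namefun$) or on ingredients that depend only on the subarguments and premises of $X$ and $Y$ themselves (namely $\prem(Y)$, the rules appearing in $\sub(Y)$, and the restriction of $\preceq$ to $\sub(X)\cup\sub(Y)$).

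Concretely, I would start by unfolding the hypothesis: by Definition \ref{def:prem-isaf-abstract-completions}, there exist premise-completions $\nsaf_1,\nsaf_2\in\premprefix\compfun(\premprefix\nisaf)$ with $\daf(\nsaf_j)=\langle\args_j,\defrel_j\rangle$ for $j\in\{1,2\}$. An auxiliary fact worth isolating first is that if $X\in\args_1\cap\args_2$ then every $X'\in\sub(X)$ also belongs to $\args_1\cap\args_2$: indeed $\prem(X')\subseteq\prem(X)\subseteq \kb(\nsaf_1)\cap\kb(\nsaf_2)$, and the rules used in $X'$ are also those of $X$, hence $X'$ can be reconstructed inside each of the two argumentation theories.

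Next, I would run the standard case split on the three defeat types. For undercutting, the only data involved are $\conc(X)$, $\overline{\cdot}$, $\ftoprule(Y')$ for some $Y'\in\sub(Y)$, and $\namefun$; all of these are shared between $\nsaf_1$ and $\nsaf_2$ by Definition \ref{def:prem-isaf-completion}, so ``$X$ undercuts $Y$ in $\nsaf_1$'' is equivalent to ``$X$ undercuts $Y$ in $\nsaf_2$''. For undermining, one additionally uses that the distinction $\kb_n$ vs. $\kb_p$ is preserved across completions (premise status is fixed, only presence varies), and that the relevant premise $\varphi\in\prem(Y)$ belongs to $\kb_p(\nsaf_1)\cap\kb_p(\nsaf_2)$ because $Y$ is in both argument sets. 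Rebuttal is analogous and even simpler.

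The only non-routine step is handling preferences in the undermining/rebuttal case. The condition $X\not\prec_j Y'$ must agree for $j=1,2$. Here I would invoke the subargument observation above to conclude $X,Y'\in\args_1\cap\args_2$, so by $\preceq_j=\preceq_{\restrict \args_j}$ both $\preceq_1$ and $\preceq_2$ coincide with $\preceq$ on the pair $(X,Y')$, giving $X\prec_1 Y' \iff X\prec Y' \iff X\prec_2 Y'$. This is the main (and essentially the only) point where one must be careful; everything else is a bookkeeping argument analogous to Proposition \ref{prop:rul-ISAFs-arg-incompleteness}.
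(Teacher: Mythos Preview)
Your proposal is correct and follows essentially the same approach as the paper, which simply remarks that the argument is very similar to Proposition~\ref{prop:rul-ISAFs-arg-incompleteness} and that the presence or absence of premises does not affect defeats once the relevant arguments are present. In fact you spell out more detail than the paper does (notably the subargument-closure observation used to justify $X\prec_1 Y' \iff X\prec_2 Y'$), but the structure and key idea are identical.
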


 \begin{proof}
      {The proof is very similar to that of Proposition \ref{prop:rul-ISAFs-arg-incompleteness}. The key point is that the presence/absence of a set of premises does not affect the presence/absence of defeats, given that the involved arguments are present.}
 \end{proof}
Also, analogously to rule-incompleteness, knowledge base-incompleteness is expressive enough to subsume standard arg-IAFs at the abstract level.

\begin{theorem}\label{thm:arg-IAFs-lexpressive-prem-ISAFs}
        $\text{arg-IAFs } \lexpressive \text{ prem-ISAFs}$.
\end{theorem}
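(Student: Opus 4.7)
The plan is to mimic the proof of Theorem \ref{thm:r-ISAF-more-than-a-IAFs}, but encoding uncertain arguments as uncertain ordinary premises rather than as premiseless uncertain defeasible rules. Given an arg-IAF $\argprefix\niaf=\ang{\args^\fix,\args^?,\defrel}$, I will build a prem-ISAF $\premprefix\nisaf=\unravelsaf$ as follows: set $\lanset=\{p_x,\lnot p_x\mid x\in \args^\fix\cup\args^?\}$; let $\overline{\cdot}$ be given by (C1) $p\in\overline{\lnot p}$ and $\lnot p\in\overline{p}$ for every $p\in\lanset$ (to guarantee contradictories), and (C2) $p_x\in\overline{p_y}$ iff $\ang{x,y}\in\defrel$; take $\rulset=\emptyset$, $\namefun$ undefined, $\preceq=\emptyset$, $\kb_a^\fix=\kb_a^?=\emptyset$, $\kb_p^\fix=\{p_x\mid x\in \args^\fix\}$ and $\kb_p^?=\{p_x\mid x\in \args^?\}$.

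Because $\rulset=\emptyset$, every argument in any premise-completion is a single formula lying in $\kb_p^\ast$; in particular, no rebuttals or undercuts arise, and undermining reduces to the contrary condition (C2), since $\preceq=\emptyset$ makes the preference side condition in the defeat definition trivially satisfied. Thus the natural bijection $i:\args^\fix\cup\args^?\to \kb_p^\fix\cup \kb_p^?$ defined by $i(x)=p_x$ sends arguments to arguments and, by (C2), defeats to defeats.

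The verification of equivalence of completion sets then splits, just as in Theorem \ref{thm:r-ISAF-more-than-a-IAFs}, into (T1) and (T2). For (T1), given a completion $\ang{\args^\ast,\defrel^\ast}\in \compfun(\argprefix\niaf)$, the premise-completion obtained by taking $\kb_p^\ast=\{p_x\mid x\in \args^\ast\}$ (with $\kb_a^\ast=\emptyset$ and $\preceq^\ast=\emptyset$) generates the set of arguments $i[\args^\ast]$ and, by (C2), precisely the defeats $i[\defrel^\ast]$. For (T2), any premise-completion $\nsaf^\ast$ has $\kb_p^\ast$ lying between $\kb_p^\fix$ and $\kb_p^\fix\cup\kb_p^?$; its preimage $\args^\ast=i^{-1}[\kb_p^\ast]$ satisfies $\args^\fix\subseteq\args^\ast\subseteq\args^\fix\cup\args^?$, and again by (C2) we have $\defrel^\ast=\defrel_{\restrict \args^\ast}$, so $\ang{\args^\ast,\defrel^\ast}$ is a completion of $\argprefix\niaf$ whose $i$-image is $\daf(\nsaf^\ast)$.

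There is no substantive obstacle: the only points requiring care are (i) placing all $p_x$ in $\kb_p$ rather than $\kb_a$, so that undermining is available and the defeat relation faithfully mirrors $\defrel$, and (ii) ensuring that no unintended arguments appear, which is guaranteed by the choice $\rulset=\emptyset$. The bulk of the argument is a direct transcription of the reasoning used for rul-ISAFs.
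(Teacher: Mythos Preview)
Your proposal is correct and follows essentially the same approach as the paper: the construction (language $\{p_x,\lnot p_x\}$, contrary relation via (C1)--(C2), empty rule set, and the split $\kb_p^\fix=\{p_x\mid x\in\args^\fix\}$, $\kb_p^?=\{p_x\mid x\in\args^?\}$) is identical, and your bijection $i(x)=p_x$ is simply the inverse of the paper's $i(p_x)=x$. The paper's proof is more condensed, but your explicit verification of (T1) and (T2) and your remarks on why only undermining matters are accurate elaborations of the same argument.
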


 \begin{proof}
 Let $\nargiaf=\ang{\args^\fix,\args^?,\defrel}$ be an arg-IAF, 
     we construct the target $\premprefix\nisaf=\unravelsaf$ as follows:

     \begin{itemize}
         \item $\lanset=\{p_{x},\lnot p_{x} \mid x \in \args^\fix\cup \args^?\}$.
        \item  $\overline{\cdot}$ is given by two clauses:
         \begin{itemize}
             \item[(C1)] $p \in \overline{\lnot p}$ and $\lnot p \in \overline{p}$ for every $p,\lnot p \in \lanset$.
             \item[(C2)]  $p_{x} \in \overline{p_{y}}$ iff $\ang{x,y}\in \defrel$.
           
         \end{itemize}
         
             \item $\kb^\fix_p=\{p_{x} \mid x \in \args^\fix\}$.
              \item $\kb^?_p=\{p_{x} \mid x \in \args^?\}$.
         \item The other components of $\premprefix\nisaf$ are empty.
     \end{itemize}

 Note that all arguments of $\premprefix\nisaf$ (i.e., arguments of $\args(\nsaf^{max}$)) happen to be formulas since there are no inference rules. The equivalence function $i:\args(\nsaf^{max}) \to (\args^\fix \cup \args^?)$ we are after is defined as $i(p_x)=x$. It is easy to check that $i$ is bijective and induces a bijection among the sets of arguments of completions of both formalisms.  As for the defeat relation, it is enough to use (C2) to check that $\ang{x,y}\in \defrel(\nsaf^{\max})$ iff $\ang{i(x),i(y)}\in \defrel$. Hence, $i$ is a function satisfying the conditions of Definition \ref{def:expressivity}.
 \end{proof}

The naturally ensuing question is whether arg-IAFs are at least as expressive as prem-ISAFs. Again, the answer is negative. 

\begin{theorem}\label{prop:prem-isafs-negative}
    $\text{prem-ISAFs } \not \lexpressive \text{ arg-IAFs.}$
\end{theorem}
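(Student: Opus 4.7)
The plan is to adapt the strategy of Theorem \ref{prop:negative-arg-IAFs} to the premise-incompleteness setting. The key observation is that in ASPIC$^+$ a single uncertain ordinary premise $p$ typically underwrites several arguments at once -- the premise itself together with every argument derivable from it via the fixed rules -- and all these arguments must appear together in any premise-completion. Since arg-IAFs treat each uncertain argument as an independent Boolean choice (their completions cover \emph{every} subset of $\args^?$), this all-or-nothing cluster cannot be realised by any arg-IAF.

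Concretely, I would exhibit $\premprefix\nisaf=\unravelsaf$ where $\lanset$ is propositional, $\negfun$ is classical negation, $\rulset_s=\emptyset$, $\rulset_d=\{\ang{\{p\},q}\}$ is a single fixed defeasible rule, $\kb_a^{\fix}=\kb_a^{?}=\kb_p^{\fix}=\emptyset$, $\kb_p^{?}=\{p\}$, and $\preceq=\emptyset$. There are then exactly two premise-completions: the one with $\kb^{\ast}=\emptyset$, whose associated AF is $\ang{\emptyset,\emptyset}$, and the one with $\kb^{\ast}=\{p\}$, which generates the arguments $p$ and $p\Rightarrow q$ and yields the AF $\ang{\{p,\, p\Rightarrow q\},\emptyset}$. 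No defeats arise, since no conclusion is the classical negation of any premise and $\namefun$ is empty (so no undercuts either).

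Next, I would suppose for contradiction that some arg-IAF $\ang{\args^{\fix},\args^{?},\defrel}$ has an equivalent set of completions via a bijection $i$ in the sense of Definition \ref{def:expressivity}. Because both $i^{-1}(p)$ and $i^{-1}(p\Rightarrow q)$ are absent from one of the two completions, neither can lie in $\args^{\fix}$; hence both sit in $\args^{?}$. But an arg-IAF admits a completion for every subset of $\args^{?}$, so in particular it would admit a completion containing exactly $i^{-1}(p)$ and not $i^{-1}(p\Rightarrow q)$ (or vice versa). Such an AF is isomorphic to neither of the two completions of $\premprefix\nisaf$, contradicting equivalence.

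The only real obstacle is the bookkeeping: verifying that the construction produces precisely the two claimed completions and no hidden defeats. Once that routine check is dispatched, the combinatorial core is identical to Theorem \ref{prop:negative-arg-IAFs} -- an arg-IAF exposes $2^{|\args^{?}|}$ completions, whereas our prem-ISAF exposes a strictly smaller, dependency-constrained family.
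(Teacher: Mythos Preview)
Your proposal is correct and follows essentially the same approach as the paper's proof: exhibit a prem-ISAF with a single uncertain premise that, together with a fixed rule, forces two arguments to appear or disappear jointly, and then observe that an arg-IAF cannot reproduce this because it admits a completion for every subset of $\args^{?}$. The paper's witness differs only cosmetically---it keeps an additional fixed premise $p$ (playing no role) and makes $q$ the uncertain premise triggering the rule $\ang{\{q\},r}$, yielding completions $\ang{\{p\},\emptyset}$ and $\ang{\{p,q,q\Rightarrow r\},\emptyset}$---whereas your construction is slightly leaner with completions of sizes $0$ and $2$; the combinatorial contradiction is identical in both.
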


\begin{proof} Consider a prem-ISAF $\unravelsaf$ where $\lanset=\{p,q,r\}$, $\kb_p^\fix=\{p\}$, $\kb_p^?=\{q\}$, $\rulset_d=\{\tuple{q,r}\}$, $\overline{\cdot}$ is given by classical negation, and the rest of the components are empty. We then get two completions, namely $\tuple{\{p\},\emptyset}$ and $\tuple{\{p,q, q\Rightarrow r\},\emptyset}$ which can be easily shown to not correspond to the set of completions of any $\argprefix\niaf$. 
\end{proof}

As for the case of rule-incompleteness, it is, however, possible to retrieve a sufficient level of expressivity by adding implicative dependencies to the abstract framework.

\begin{theorem}\label{thm:prem-iafs-representation} 
   $\text{prem-ISAFs } \lexpressive \text{ imp-arg-IAFs.}$
\end{theorem}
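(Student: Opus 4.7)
The plan is to closely mirror the proof of Theorem \ref{thm:rul-iafs-representation}, swapping the role of uncertain rules for uncertain premises. Given a prem-ISAF $\premprefix\nisaf=\unravelsaf$, we build the target imp-arg-IAF $\ang{\args^{\fix},\args^{?},\defrel,\Delta}$ as follows: $\args^{\fix}=\args(\nsaf^{\fix})$ (arguments of the minimal premise-completion), $\args^{?}=\args(\nsaf^{max})\setminus\args^{\fix}$ (arguments that necessarily use at least one uncertain premise), $\defrel=\defrel(\nsaf^{max})$, and finally
\[
\Delta=\{\opimp(\Gamma,X)\mid \prem^{?}(X)\subseteq \prem^{?}(\Gamma)\}\text{.}
\]
The equivalence between the two sets of completions will then be witnessed by any injective labelling $i:\args(\nsaf^{max})\to\uni$, so arguments and their abstract names can be identified.

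The first block of work is to establish the premise-incomplete analogue of Lemma \ref{lemmaThm1}: for any two premise-completions $\nsaf,\nsaf^\prime$ of $\premprefix\nisaf$, (i) $\kb(\nsaf)\subseteq\kb(\nsaf^\prime)$ implies $\args(\nsaf)\subseteq\args(\nsaf^\prime)$; (ii) $\kb(\nsaf)\subseteq\kb(\nsaf^\prime)$ implies $\defrel(\nsaf)=\defrel(\nsaf^\prime)_{\restrict\args(\nsaf)}$; and (iii) if $\Gamma\subseteq\args(\nsaf)$ and $\prem^{?}(X)\subseteq\prem^{?}(\Gamma)$, then $X\in\args(\nsaf)$. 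Item (i) is immediate from the inductive definition of ASPIC$^+$-arguments, (ii) is essentially Proposition \ref{prop:prem-ISAFs-arg-incompleteness} together with the fact that the splitting $\kb=\kb_a\cup\kb_p$ and the preference order on shared arguments remain untouched across completions, and (iii) follows because once $\Gamma$ is available, every uncertain premise needed to construct $X$ is already activated in some subargument of some $Y\in\Gamma$, so $\prem(X)\subseteq\prem(\Gamma)\cup\kb^{\fix}\subseteq\kb(\nsaf)$, which suffices to construct $X$ in $\nsaf$.

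With this lemma in hand, the two inclusions of $\compfun(\premprefix\nisaf)=\compfun(\imparg)$ follow the same pattern as in the proof of Theorem \ref{thm:rul-iafs-representation}. For $[\subseteq]$, any $\comp=\daf(\nsaf^{\ast})$ with $\nsaf^{\ast}\in\premprefix\compfun(\nisaf)$ verifies $\args^{\fix}\subseteq\args^{\ast}\subseteq\args^{\fix}\cup\args^{?}$ and $\defrel^{\ast}=\defrel_{\restrict\args^{\ast}}$ by items (i)--(ii), while satisfaction of each $\opimp(\Gamma,X)\in\Delta$ follows from item (iii). For $[\supseteq]$, given a completion $\comp$ of the imp-arg-IAF, we define the candidate premise-completion $\nsaf^\prime$ by keeping all certain components of $\premprefix\nisaf$ and setting $\kb^\prime=\kb^{\fix}\cup\prem^{?}(\args^{\ast})$; then we check, by the same three-step verification as before, that $\nsaf^\prime\in\premprefix\compfun(\nisaf)$, that $\args^{\ast}=\args(\nsaf^\prime)$, and that $\defrel^{\ast}=\defrel(\nsaf^\prime)$.

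The main subtle point, and the place where the proof departs slightly from the rule-incomplete case, is the $[\supseteq]$ direction of $\args^{\ast}=\args(\nsaf^\prime)$: given $X\in\args(\nsaf^\prime)$, we have $\prem^{?}(X)\subseteq\prem^{?}(\args^{\ast})$ by construction of $\kb^\prime$, hence $\opimp(\args^{\ast},X)\in\Delta$, which by the imp-arg-IAF completion clause forces $X\in\args^{\ast}$. This is exactly the step that requires our dependency set to be indexed by arbitrary (not necessarily singleton) premise sets on the left, and it is the reason why plain arg-IAFs are insufficient (cf.\ Theorem \ref{prop:prem-isafs-negative}). I expect this step, together with verifying item (iii) of the lemma for arguments built from chains of rules whose leaves involve several uncertain premises, to be the only point demanding careful bookkeeping.
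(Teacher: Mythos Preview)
Your proposal is correct and follows essentially the same approach as the paper: the same target imp-arg-IAF (with $\Delta$ indexed by $\prem^{?}(X)\subseteq\prem^{?}(\Gamma)$), the same premise-analogue of Lemma~\ref{lemmaThm1}, and the same two-inclusion argument with the candidate premise-completion $\kb'=\kb^{\fix}\cup\prem^{?}(\args^{\ast})$. The only cosmetic difference is that the paper explicitly splits $\kb'$ into its axiom and ordinary-premise parts, $\kb_a'=\kb_a^{\fix}\cup(\prem^{?}(\args^{\ast})\cap\kb_a^{?})$ and $\kb_p'=\kb_p^{\fix}\cup(\prem^{?}(\args^{\ast})\cap\kb_p^{?})$, which you should also do to verify that $\nsaf'$ is a genuine premise-completion in the sense of Definition~\ref{def:prem-isaf-completion}.
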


\begin{proof}
Let $\premprefix\nisaf=\unravelsaf$ be a premise-incomplete structured AF with $\kb=\kb^{\fix}_a \cup \kb^{\fix}_p \cup \kb^{\fix}_a \cup \kb^{\fix}_p$. We define the target imp-arg-IAF $\ang{\args^{\fix},\args^{?},\defrel,\Delta}$ as follows:

\begin{itemize}
\item $\args^{\fix}=\args(\nsaf^{\fix})$.

\item $\args^{?}= \args(\nsaf^{max})\setminus\args^{\fix}$.

\item $\defrel=\defrel(\nsaf^{max})$.

\item $\Delta=\{\opimp(\Gamma,X)\in (\wp(\args^{?})\setminus \{\emptyset\})\times \args^{?} \mid \prem^{?}(X)\subseteq \prem^{?}(\Gamma)\}$.
\end{itemize}

Any injective function $i: \args(\nsaf^{max})\to \uni$ will do the job here for proving equivalence among the two sets of completions, so we just identify each $X \in \nsaf^{max}$ with its abstract name $i(X)$.  We show that both directions of the equality  $\compfun(\premprefix\nisaf)=\compfun(\ang{\args^{\fix},\args^{?},\defrel,\Delta})$ hold:

\par \smallskip

\noindent [$\subseteq$] Suppose $\comp \in \compfun(\premprefix\nisaf)$, which amounts by definition of $\compfun$ to

\begin{itemize}
\item[(H0)] $\comp=\daf(\nsaf^{\ast})$ for some $\nsaf^{\ast}\in \premprefix\compfun(\nisaf)$
\end{itemize}  Hence, we just need to check that $\comp$ satisfies the conditions for being a completion of $\ang{\args^{\fix},\args^{?},\defrel,\Delta}$. To do so, we need to establish the following claims, whose proof we omit:

\begin{lemma}\label{lemmaThm2}
Let $\nsaf,\nsaf^{\prime}\in \premprefix\compfun(\nisaf)$. Then:

\begin{enumerate}
\item\label{lemmaThm2.a} $\kb(\nsaf)\subseteq \kb(\nsaf^{\prime})$ implies $\args(\nsaf)\subseteq \args(\nsaf^{\prime})$.
\item\label{lemmaThm2.b} $\kb(\nsaf)\subseteq \kb(\nsaf^{\prime})$ implies $\defrel(\nsaf)= \defrel(\nsaf^{\prime})_{\restrict \args(\nsaf)}$.
\item For every $\Gamma\subseteq \args(\nsaf^{max})$ and every $X \in \args(\nsaf^{\max})$ we have that:

\begin{center}
If $\Gamma\subseteq \args(\nsaf)$ and $\prem^{?}(X)\subseteq \prem^{?}(\Gamma)$, then $X \in \args(\nsaf)$.
\end{center}
\end{enumerate}
\end{lemma}

As for Lemma \ref{lemmaThm1}, of Theorem \ref{thm:rul-iafs-representation}, each item of the previous lemma guarantees the satisfaction of the corresponding condition for being a completion. 

\par \smallskip

\newcommand{\unravelpremprime}{\ang{\lanset, \negfun, \rulset, \namefun, \kb^{\prime}, \preceq^{\prime}}}

[$\supseteq$] Suppose $\comp \in \compfun(\imparg)$, we have that:

\begin{itemize}

\item[(H1)] $\args^{\fix}\subseteq \args^{\ast}\subseteq \args^{\fix}\cup \args^{?}$.
\item[(H2)] $\defrel^{\ast}=\defrel_{\restrict \args^{\ast}}$.
\item[(H3)] For all $\opimp(\Gamma,X)\in \Delta$ if $\Gamma\subseteq \args^{\ast}$, then $X\in \args^{\ast}$.

\end{itemize}

Now, we need to find a premise-completion of $\nisaf$, $\nsaf^{\prime}$, s.t.\ $\daf(\nsaf^{\prime})=\comp$. We define $\nsaf^{\prime}=\unravelpremprime$ where the only component that has to be specified is $\kb^{\prime}=\kb_a^{\prime}\cup\kb_p^{\prime}$ where:

\begin{center}
$\kb^{\prime}_a=\kb^{\fix}_a\cup (\prem^{?}(\args^{\ast})\cap \kb_a^?)$; and \\
$\kb^{\prime}_p=\kb^{\fix}_p\cup (\prem^{?}(\args^{\ast})\cap \kb_p^?)$.
\end{center}

We need to check three things about $\nsaf^\prime$, namely:

\begin{enumerate}
\item $\nsaf^{\prime}\in \premprefix\compfun(\nisaf)$.
\item $\args^{\ast}=\args(\nsaf^{\prime})$.
\item $\rel^{\ast}=\rel(\nsaf^{\prime}$).
\end{enumerate}

For 1, we need to show \begin{itemize}
    \item $\kb_a^\fix \subseteq \kb_a^\prime \subseteq (\kb_a^\fix \cup \kb_a^?)$.
    \item $\kb_p^\fix \subseteq \kb_p^\prime \subseteq (\kb_p^\fix \cup \kb_p^?)$.
\end{itemize} The left-hand side inclusions ($\kb_a^\fix \subseteq \kb_a^\prime$ and $\kb_p^\fix \subseteq \kb_p^\prime$) hold by definition of $\kb^{\prime}$. As for the right-hand side inclusion, suppose that $\varphi \in \kb_a^{\prime}$ (resp.\ $\varphi \in \kb_p^{\prime}$), which implies, by definition of $\kb^{\prime}$, $\varphi \in \kb_a^{\fix}\cup (\prem^{?}(\args^{\ast})\cap \kb_a)$ (resp.\ $\varphi \in \kb_p^{\fix}\cup (\prem^{?}(\args^{\ast})\cap \kb_p)$). This entails, by (H1) and definition of $\args^{?}$, that $\varphi \in \kb_a^{\fix}\cup \kb_a^{?}$ (resp.\ $\varphi \in \kb_p^{\fix}\cup \kb_p^{?}$). \par \medskip

For 2, we show that both inclusions hold. [$\subseteq$] Suppose $X \in \args^{\ast}$, which implies by (H1) that $X \in \args^{\fix}\cup \args^{?}$. We continue by cases. In the case that $X \in \args^{\fix}$, we have that $X \in \args(\nsaf^{\fix})$ (by definition of $\args^{\fix}$), which implies $X \in \args(\nsaf^{\prime})$ (by definition of $\kb^{\prime}$ and Lemma \ref{lemmaThm2}.\ref{lemmaThm2.a}). In the case that $X \in \args^{?}$, we have that $\prem(X)\subseteq \kb(\nsaf^{\prime})$ (by definition of $\kb^{\prime}$), which implies $X \in \args(\nsaf^{\prime})$ (by definition of ASPIC$^{+}$-argument). [$\supseteq$]
Suppose $X \in \args(\nsaf^{\prime})$, which implies (by definition of $\nsaf^{\prime}$) that $\prem(X)\subseteq \kb^{\fix}\cup \rulset^{?}(\args^{\ast})$. This entails (by definition of $\rulset^{?}$) $\prem^{?}(X)\subseteq \prem^{?}(\args^{\ast})$, and in turn (by definition of $\Delta$) that $\ang{\args^{\ast},X}\in \Delta$. From here it follows (by definition of completion of imp-arg-IAFs) that $X \in \args^{\ast}$).

\par \medskip

For 3,  we show that both inclusions hold. [$\subseteq$] Suppose $\ang{X,Y} \in \rel^{\ast}$ which implies (by definition completion for imp-arg-IAFs) that $\langle X,Y \rangle \in \args^{\ast}\times \args^{\ast}$ and $\ang{X,Y} \in \rel$ which implies (by 2. and definition of $\rel$, respectively) that $\ang{X,Y}\in \args(\nsaf')\times \args(\nsaf')$ and $\ang{X,Y} \in \rel(\nsaf^{\max})$ (which implies, by Lemma \ref{lemmaThm2}.\ref{lemmaThm2.b} and by the fact that $\kb(\nsaf')\subseteq\kb(\nsaf^{max})$) that $\ang{X,Y}\in \rel(\nsaf')$. [$\supseteq$] Suppose that $\ang{X,Y}\in \rel(\nsaf')$ which implies (by 2. and Lemma \ref{lemmaThm2}.\ref{lemmaThm2.b}, respectively) that $X,Y \in \args^{\ast}$ and $\ang{X,Y}\in \nsaf^{max}$, which implies (by definition of $\rel$) that  $X,Y \in \args^{\ast}$ and $\ang{X,Y}\in \rel$, which implies $\ang{X,Y}\in \rel^{\ast}$ (by definition completion for imp-arg-IAFs).
\end{proof}

Once we augment the power of abstract frameworks, an interesting question is whether the converse of Theorem \ref{thm:prem-iafs-representation} above holds, i.e. whether prem-ISAFs are as expressive as imp-arg-IAFS. The answer is negative. To show this, we first characterise the functioning of implicative dependencies in prem-ISAFs by the following lemma:


\begin{lemma}\label{lemma:caract-prem-completions} Let $\premprefix\nisaf=\unravelsaf$ be a prem-ISAF, and let $\Gamma \subseteq \args(\nsaf^{max})$, $X \in \args(\nsaf^{max})$ we have that:

\begin{center}
    for every $\comp \in \compfun(\premprefix\nisaf)$, $\Gamma\subseteq \args^\ast$ implies $X \in \args^\ast$ \\
    iff \\
    $\prem^?(X)\subseteq \prem^?(\Gamma)$.
\end{center}    
\end{lemma}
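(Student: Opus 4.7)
The plan is to follow the same strategy used for Lemma \ref{lemma:caract-rul-completions}, substituting uncertain rules with uncertain premises throughout. The two key enabling facts are that in a prem-ISAF the full set of rules $\rulset$ is available in every completion (only premises vary), and that an ASPIC$^+$ argument can be built within a completion $\nsaf^\ast$ precisely when all its premises lie in $\kb(\nsaf^\ast)$; this latter claim follows by a routine induction on argument structure and will be used tacitly in both directions.

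For the left-to-right direction I would argue by contraposition. Assuming $\prem^?(X)\not\subseteq\prem^?(\Gamma)$, pick some $\varphi\in\prem^?(X)\setminus\prem^?(\Gamma)$ and consider the premise-completion $\nsaf^\Gamma$ whose knowledge base is $\kb^\fix_a\cup(\prem^?(\Gamma)\cap\kb^?_a)$ on the axiomatic side and $\kb^\fix_p\cup(\prem^?(\Gamma)\cap\kb^?_p)$ on the ordinary side. By construction $\nsaf^\Gamma$ is a legitimate premise-completion: every uncertain premise actually used to construct an argument in $\Gamma$ is preserved, so $\Gamma\subseteq\args(\nsaf^\Gamma)$; meanwhile $\varphi\notin\kb(\nsaf^\Gamma)$ forces $X\notin\args(\nsaf^\Gamma)$, contradicting the hypothesis.

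For the right-to-left direction, assume $\prem^?(X)\subseteq\prem^?(\Gamma)$ and let $\comp\in\compfun(\premprefix\nisaf)$ with $\Gamma\subseteq\args^\ast$. Then some premise-completion $\nsaf^\ast$ realises this, and since arguments in $\Gamma$ draw their premises from $\kb(\nsaf^\ast)$ we obtain $\prem^?(\Gamma)\subseteq\kb(\nsaf^\ast)$ and hence $\prem^?(X)\subseteq\kb(\nsaf^\ast)$ by the hypothesis. Combining with the trivial inclusion $\prem^\fix(X)\subseteq\kb^\fix\subseteq\kb(\nsaf^\ast)$ yields $\prem(X)\subseteq\kb(\nsaf^\ast)$, and since all rules in $\rulset(X)$ are always present, an induction on the construction of $X$ gives $X\in\args(\nsaf^\ast)=\args^\ast$.

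The main obstacle is mostly bookkeeping: respecting the four-way split of $\kb$ into certain/uncertain axioms and certain/uncertain ordinary premises when defining $\nsaf^\Gamma$, and invoking the induction on argument construction cleanly in both directions. The structural symmetry with Lemma \ref{lemma:caract-rul-completions} ensures that no genuinely new conceptual difficulty arises; the shift from rules to premises as the source of uncertainty is the only substantive change.
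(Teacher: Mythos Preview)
Your proposal is correct and follows essentially the same approach as the paper's proof: both directions are handled identically, with the left-to-right direction argued by contraposition via the premise-completion built from $\kb^\fix\cup\prem^?(\Gamma)$, and the right-to-left direction via the chain $\prem^?(X)\subseteq\prem^?(\Gamma)\subseteq\kb(\nsaf^\ast)$. Your version is in fact slightly more careful than the paper's in respecting the axiom/ordinary-premise split when defining $\nsaf^\Gamma$ and in making explicit that $\prem^\fix(X)\subseteq\kb^\fix$ covers the certain premises.
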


\begin{proof}
    From left to right, suppose that \par \smallskip \noindent
  (H1) for every $\comp \in \compfun(\premprefix\nisaf)$, $\Gamma\subseteq \args^\ast$ implies $X \in \args^\ast$ \par \smallskip \noindent
    and, reasoning towards contradiction, suppose that \par \smallskip \noindent
    (H2) there is a $\varphi\in \prem^?(X)$ such that $\varphi \notin \prem^?(\Gamma)$ \par \smallskip

    Note that $\nsaf_0=\ang{\lanset,\overline{\cdot},\rulset,\namefun,\kb^\fix \cup \prem?(\Gamma),\preceq^\ast}$
    \footnote{$\preceq^\ast$ is the restriction of $\preceq$ to $\args(\ang{\lanset,\overline{\cdot},\rulset,\namefun,\kb^\fix \cup \prem?(\Gamma)})$ as established in Definition \ref{def:prem-isaf-completion}.} is a prem-completion of $\premprefix\nisaf$. Moreover, it is clear that $\Gamma\subseteq \args(\nsaf_0)$ (by definition of argument and $\nsaf_0$). However, $X \notin \args(\nsaf_0)$ (because of (H3) and the definition of $\nsaf_0$). But since $\naf(\nsaf_0)\in \compfun(\premprefix\nisaf)$, we have that (H1) must be false (contradiction!). \par \medskip

    From right to left, suppose that 

\par \smallskip \noindent
    (H3) $\prem^?(X)\subseteq \prem^?(\Delta)$, and \par \smallskip \noindent
 \par \smallskip \noindent
    (H4) $\ang{\args_1,\defrel_1}\in \compfun(\premprefix\nisaf)$ and $\Gamma\subseteq \args_1$ (with the aim to show that $ X \in \args_1$). \par \smallskip

    From (H4), we have that $\ang{\args_1,\defrel_1}\in \naf(\nsaf_1)$ for some $\nsaf_1\in \premprefix\compfun(\nisaf)$ (by Definition \ref{def:prem-isaf-abstract-completions}). Also from (H4) and by the definition of ASPIC$^+$ argument, we can deduce that $\prem^?(\Gamma)\subseteq \kb(\nsaf_1)$. By (H3), we get $\prem^?(X)\subseteq \kb(\nsaf_1)$, which implies $X \in \args_1$.
\end{proof}

This allows proving the following result.

\begin{theorem}\label{thm:imp-strictly-more-prem}
    $\text{imp-arg-IAFs }\not \lexpressive \text{ prem-ISAFs.}$
\end{theorem}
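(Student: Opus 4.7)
The plan is to follow the strategy of Theorem \ref{prop:impargiafs-not-lessexpresive-rulisafs}, adapting it to the premise setting. As the witness imp-arg-IAF, I would take $\ang{\args^\fix,\args^?,\defrel,\Delta}$ with $\args^\fix=\emptyset$, $\args^?=\{a,b,c\}$, $\defrel=\emptyset$ and $\Delta=\{\opimp(\{a,b\},c)\}$, whose completions are all subsets of $\{a,b,c\}$ except $\{a,b\}$ (seven completions in total, including $\{c\}$).

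Assume for contradiction that some prem-ISAF $\premprefix\nisaf$ realises these completions through an equivalence witnessed by a bijection $i$. Writing $P_x=\prem^?(i(x))$ for $x\in\{a,b,c\}$, I would use Lemma \ref{lemma:caract-prem-completions} in both directions: from the fact that every completion containing $i(a)$ and $i(b)$ must contain $i(c)$, the lemma yields $P_c\subseteq P_a\cup P_b$; from the existence of the completions $\{a\}$ and $\{b\}$ (so that $\opimp(\{a\},c)$ and $\opimp(\{b\},c)$ fail), it yields $P_c\not\subseteq P_a$ and $P_c\not\subseteq P_b$. Hence there exist formulas $\varphi\in(P_c\cap P_a)\setminus P_b$ and $\psi\in(P_c\cap P_b)\setminus P_a$, with $\varphi\neq\psi$. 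Since $\varphi,\psi\in\kb^?$, each of them is an uncertain single-formula argument of $\premprefix\nisaf$.

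The heart of the argument is a bookkeeping step forced by the bijection. The witness has exactly three arguments across its completions, so the bijection forces $\args(\nsaf^{max})=\{i(a),i(b),i(c)\}$; in particular, the single-formula argument $\varphi$ must coincide with one of these. A short case analysis rules out two options: if $\varphi=i(b)$ then $i(b)$ is the single-formula argument $\varphi$, so $\varphi\in P_b$, contradicting $\varphi\notin P_b$; if $\varphi=i(c)$ then $P_c=\{\varphi\}$, contradicting $\psi\in P_c$ with $\psi\neq\varphi$. Hence $\varphi=i(a)$, and by symmetry $\psi=i(b)$. In particular, $i(a)=\varphi\in P_c=\prem^?(i(c))$.

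The contradiction now comes from the completion $\{c\}$: by equivalence, there is a prem-completion $\nsaf^\ast$ with $i(c)\in\args^\ast$ but $i(a)\notin\args^\ast$. Yet $i(c)\in\args^\ast$ forces every uncertain premise of $i(c)$ to lie in $\kb^?(\nsaf^\ast)$, in particular the formula $\varphi$; but then its single-formula argument $\varphi=i(a)$ also belongs to $\args^\ast$, a contradiction. I expect the only real subtlety to be the bookkeeping step that links uncertain premises to uncertain single-formula arguments counted by the bijection: the proof crucially uses that in ASPIC$^+$ every formula in $\kb^?$ automatically generates an uncertain argument of its own, so no prem-ISAF can ``hide'' the extra arguments $\varphi$ and $\psi$ without breaking the bijection.
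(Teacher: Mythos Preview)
Your proof is correct, but it takes a somewhat different route than the paper. Both arguments rely on Lemma \ref{lemma:caract-prem-completions} and the key observation that any formula in $\kb^?$ is itself an uncertain argument counted by the bijection, but the witnesses and the combinatorics differ.

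The paper uses the imp-arg-IAF with $\args^\fix=\emptyset$, $\args^?=\{a,b,c\}$, $\defrel=\emptyset$ and $\Delta=\{\opimp(\{a\},b),\opimp(\{c\},b)\}$, which has five completions. From Lemma \ref{lemma:caract-prem-completions} it derives $\prem^?(b)\subseteq\prem^?(a)$, $\prem^?(b)\subseteq\prem^?(c)$, and $\prem^?(a)\not\subseteq\prem^?(b)\cup\prem^?(c)$; the last inclusion failure produces a single premise $\varphi_1\in\prem^?(a)$ that can be shown (by a short case check) to be distinct from all of $i(a),i(b),i(c)$, immediately contradicting the bijection. Your witness (the same one used in Theorem \ref{prop:impargiafs-not-lessexpresive-rulisafs}) has seven completions and leads to two premises $\varphi,\psi$; you then need an extra case analysis to identify $\varphi=i(a)$ and $\psi=i(b)$, and finally invoke the completion $\{c\}$ to reach the contradiction. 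The paper's choice of witness makes the endgame a bit shorter (one extraneous premise suffices), while yours has the pleasant feature of reusing the very same imp-arg-IAF that separates imp-arg-IAFs from rul-ISAFs.
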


\begin{proof}
    A simple witness of this existential claim is $\ang{\args^\fix,\args^?,\defrel, \Delta}$ where:
    \begin{itemize}
        \item $\argsf=\emptyset$.
        \item $\args^?=\{a,b,c\}$.
       \item $\defrel=\emptyset$.
        \item $\Delta=\{\opimp(\{a\},b), \opimp(\{c\},b)\}$.
    \end{itemize}

We have five completions (all with an empty defeat relation), namely $\{a,b\}$, $\{b\}$, $\{b,c\}$, $\{a,b,c\}$ and $\emptyset$. Now, reasoning toward contradiction, let us suppose that there is a prem-ISAF $\premprefix\nisaf=\unravelsaf$ with a set of completions that is equivalent to the previous one. For simplicity, we identify each $X \in \nsaf^{max}$ with $i(X)\in \{a,b,c\}$ in the remainder of the proof. Two observations about $\premprefix\nisaf$ can be derived by noting that there are no certain arguments. First, there are no rules with an empty antecedent and hence no argument with an empty set of premises, since this argument would be certain. Second, $\kb^\fix=\emptyset$, as otherwise we would again have at least a certain argument. Putting both observations together we have that each argument has at least an uncertain premise. By Lemma \ref{lemma:caract-prem-completions}, we have that the previous set of completions implies that: $\prem^?(b)\subseteq \prem^?(a)$, $\prem^?(b)\subseteq \prem^?(c)$ but $\prem^?(a)\nsubseteq \prem^?(b)\cup \prem^?(c)$. If there are only uncertain premises and no argument without premises we have that there are $\varphi_1,\varphi_2 \in \prem(a)$ such that $\varphi_1 \notin \prem(b)\cup \prem(c)$. It is easy to check that $\varphi_1 \neq a$, $\varphi_1 \neq b$ and $\varphi_1 \neq c$. However, $\varphi_1$ must appear in a completion of $\premprefix\nisaf$, hence the set of completions of $\argprefix\niaf$ and $\premprefix\nisaf$ are not equivalent (contradiction!).
\end{proof}

\black

\subsection{Comparing Uncertain Rules and Premises}
\black
\begin{theorem}\label{prop:prem-no-more-rule}     
    $\text{rul-ISAFs }\not \lexpressive \text{ prem-ISAFs.}$
\end{theorem}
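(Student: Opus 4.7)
The plan is to exhibit a rul-ISAF whose set of completions cannot be replicated by any prem-ISAF, in the same spirit as Theorems \ref{prop:negative-arg-IAFs} and \ref{prop:impargiafs-not-lessexpresive-rulisafs}. I would engineer the witness so that every completion has an empty defeat relation, reducing the obstruction to purely structural constraints on the argument set. Concretely, take $\rulprefix\nisaf_0=\unravelsaf$ where $\kb_p=\{p\}$, all fixed-rule sets are empty, $\rulset^?_d=\{r_1,r_2\}$ with $r_1=\ang{\emptyset,q_1}$ and $r_2=\ang{\{q_1\},q_2}$, and $\overline{\cdot}$ is chosen so that no contraries arise among $\{p,q_1,q_2\}$. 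Of the four rule-completions, $\{r_2\}$ generates the same arguments as $\emptyset$ (no $q_1$-conclusion argument is available), so $\compfun(\rulprefix\nisaf_0)$ contains exactly three defeat-free AFs with argument sets $\{p\}$, $\{p,A_1\}$ and $\{p,A_1,A_2\}$, where $A_1={\Rightarrow}q_1$ and $A_2={\Rightarrow}q_1{\Rightarrow}q_2$.

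Suppose toward contradiction that some prem-ISAF $\premprefix\nisaf$ satisfies $\compfun(\premprefix\nisaf)\approxeq\compfun(\rulprefix\nisaf_0)$ via a bijection $i$. Because the three completions are strictly nested with cardinalities $1,2,3$, the bijection is forced: writing $A=i(p)$, $B=i(A_1)$, $C=i(A_2)$, the prem-ISAF completions are exactly $\{A\}$, $\{A,B\}$ and $\{A,B,C\}$, all defeat-free. Since $A$ belongs to every premise-completion, $\prem^?(A)=\emptyset$; and since each of $B,C$ is missing from at least one completion, $\prem^?(B),\prem^?(C)\neq\emptyset$. Applying Lemma \ref{lemma:caract-prem-completions} to the implication ``$C$ present $\Rightarrow B$ present'' (true) and to its converse (false at $\{A,B\}$) yields $\prem^?(B)\subsetneq\prem^?(C)$.

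I would then fix any $\varphi\in\prem^?(C)\setminus\prem^?(B)$. Since $\varphi\in\kb^?$, the single-formula argument $\varphi$ lies in $\args(\nsaf^{max})$, i.e.\ in the union of the prem-ISAF's completions, and therefore coincides with one of $A,B,C$. A three-way case analysis closes the argument: if $\varphi=A$, then $\prem^?(A)=\{\varphi\}\neq\emptyset$; if $\varphi=B$, then $\prem^?(B)=\{\varphi\}$ contradicts $\varphi\notin\prem^?(B)$; if $\varphi=C$, then $\prem^?(C)=\{\varphi\}$ combined with $\emptyset\neq\prem^?(B)\subseteq\prem^?(C)$ forces $\prem^?(B)=\prem^?(C)$, so Lemma \ref{lemma:caract-prem-completions} applied in both directions makes $B$ and $C$ always appear together, contradicting $\{A,B\}\in\compfun(\premprefix\nisaf)$.

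The conceptual crux, and the step I expect to require the most care, is the observation that every uncertain premise formula in a prem-ISAF is automatically a stand-alone premise-argument of the maximal premise-completion. The witness is designed so that the three-argument maximal completion leaves no room to host such an extra premise-argument, whereas a rul-ISAF can introduce the uncertain argument $A_1={\Rightarrow}q_1$ via the uncertain premiseless rule $r_1$ without forcing any additional uncertain argument into the pool.
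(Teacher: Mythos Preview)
Your proof is correct. The paper takes a different but closely related route: it builds a rul-ISAF with three uncertain defeasible rules $\ang{\{\},p_b}$, $\ang{\{p_b\},p_a}$, $\ang{\{p_b\},p_c}$ (and everything else empty), whose five abstract completions coincide with those of the imp-arg-IAF used as witness in Theorem~\ref{thm:imp-strictly-more-prem}. The impossibility on the prem-ISAF side is then obtained by citing that earlier proof rather than arguing afresh.

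Your witness is smaller (two uncertain rules, one fixed premise, three nested completions) and your argument is self-contained, going directly through Lemma~\ref{lemma:caract-prem-completions} and the case split on which of $A,B,C$ the rogue uncertain premise $\varphi$ could be. The underlying obstruction is the same in both proofs---every uncertain premise of a prem-ISAF is itself an argument of $\nsaf^{max}$, so there is no room to ``hide'' the extra premise once the maximal completion has only three arguments---but the paper exploits this via the reused Theorem~\ref{thm:imp-strictly-more-prem}, while you make it explicit in a tighter three-argument setting. Your approach buys a standalone proof with a minimal witness; the paper's approach buys brevity through reuse.
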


\begin{proof}
        The target rul-ISAF is defined as follows:

        \begin{itemize}
            \item $\lanset=\{p_x, \lnot p_x \mid x \in \{a,b,c\}\}$.
            \item $\overline{\cdot}$ is given by $\lnot$.
            \item $\rulset_d^?=\{ \ang{\{\}, p_b}, \ang{\{p_b\},p_a }, \ang{\{p_b\},p_c}\}$.
            \item  The other components are empty.
        \end{itemize}
        
The completions of this rul-ISAF are equivalent to the ones of the imp-arg-IAF defined in the proof of Theorem \ref{thm:imp-strictly-more-prem}, hence they are not equivalent to the completions of any prem-ISAF.
    \end{proof}


We now move to the most involved result of this section, showing that $\text{prem-ISAFs } \lexpressive \text{ rul-ISAFs}$. For doing this, we need to take a small detour. First, we show that prem-ISAFs are as expressive as one of their proper subclasses, which we will call \textit{tidy} prem-ISAFs. Later, we show that any rul-ISAF can be translated to an equivalent tidy prem-ISAF.

\begin{definition}\label{def:tidypremisafs}
A prem-ISAF $\premprefix\nisaf=\unravelsaf$ is \textbf{tidy} when $ \kb \cap\{\psi \mid \ang{\{\},\psi}\in \rulset\}=\emptyset$, i.e., when there is no $\varphi \in \lanset$ such that $\varphi$ is a premise and the conclusion of a premiseless rule simultaneously. We denote by t-prem-ISAFs the class of all tidy prem-ISAF.
\end{definition}

\begin{lemma}\label{lemma:tidyasexpressive}
    $\text{prem-ISAFs } \lexpressive \text{ t-prem-ISAFs.}$
\end{lemma}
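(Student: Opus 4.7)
The plan is to transform any prem-ISAF $\premprefix\nisaf = \unravelsaf$ into an equivalent tidy one by renaming the conclusions of the problematic premiseless rules. Let $P = \kb \cap \{\psi \mid \ang{\{\},\psi} \in \rulset\}$; this is precisely the set of formulas responsible for non-tidyness. For each $\varphi \in P$ I introduce a fresh symbol $\varphi^{\dagger}$ and construct $\premprefix\nisaf' = \ang{\lanset', \negfun', \rulset', \namefun', \kb, \preceq'}$ as follows: (i) $\lanset' = \lanset \cup \{\varphi^{\dagger} \mid \varphi \in P\}$; (ii) letting $Z' = \psi$ if $Z = \psi^{\dagger}$ and $Z' = Z$ otherwise, I set $Y \in \negfun'(Z)$ iff $Y' \in \negfun(Z')$, so that each $\varphi^{\dagger}$ plays exactly the attack-role of $\varphi$; (iii) each premiseless rule $\ang{\{\},\varphi}$ with $\varphi \in P$ is replaced by $\ang{\{\},\varphi^{\dagger}}$ (same strict/defeasible status and name); (iv) for every non-premiseless rule $\ang{B,\psi} \in \rulset$ and every $Q \subseteq B \cap P$, the variant rule $\ang{(B \setminus Q) \cup \{\varphi^{\dagger} \mid \varphi \in Q\},\psi}$ is added to $\rulset'$, sharing the same name and strict/defeasible status (the choice $Q = \emptyset$ keeps the original rule); (v) $\kb$ and its partitioning are left unchanged. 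Tidyness of $\premprefix\nisaf'$ is then immediate, since the conclusions of premiseless rules in $\rulset'$ are either fresh $\varphi^{\dagger}$'s or formulas in $\{\psi \mid \ang{\{\},\psi} \in \rulset\} \setminus P$, and none of these lies in $\kb' = \kb$.

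Next, I would define a bijection $i$ between the two argument universes by recursion on argument structure: $i$ is the identity on premises and on premiseless-rule arguments whose conclusion lies outside $P$; $i(\Rrightarrow \varphi) = \Rrightarrow \varphi^{\dagger}$ for $\varphi \in P$; and for a compound $A = A_1, \ldots, A_n \Rrightarrow \psi$ one puts $i(A) = i(A_1), \ldots, i(A_n) \Rrightarrow \psi$, using the unique variant rule of $\rulset'$ whose body matches $\{\conc(i(A_1)), \ldots, \conc(i(A_n))\}$. The preference relation is transported via $\preceq' = \{\ang{i(A),i(B)} \mid \ang{A,B} \in \preceq\}$. A structural induction then shows that for every premise-choice $\kb^\ast$ with $\kb^{\fix} \subseteq \kb^\ast \subseteq \kb$, the set of arguments of the premise-completion of $\premprefix\nisaf'$ generated from $\kb^\ast$ coincides with $i[\args(\nsaf^\ast)]$, where $\nsaf^\ast$ is the corresponding completion in $\premprefix\nisaf$. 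This gives the bijective correspondence of argument sets required by Definition \ref{def:expressivity}.

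The main technical obstacle is showing that $i$ preserves the defeat relation, which I would handle by case analysis on rebutting, undermining, and undercutting. Rebutting and undermining are preserved because, by the construction of $\negfun'$, $\conc(i(A)) \in \negfun'(X)$ holds iff $\conc(A) \in \negfun(X')$; undercutting is preserved because every variant rule carries the same name as the original, so the $\namefun'$-based contrary check is unaffected. The subtle point is ruling out spurious defeats in $\premprefix\nisaf'$: this reduces to observing that $\negfun'(Z)$ extends $\negfun(Z)$ only by adding $\dagger$-copies of already existing contraries, and that every argument of $\premprefix\nisaf'$ equals $i(A)$ for exactly one $A$ in the original, so no ``mixed'' argument outside the image of $i$ can introduce an unmatched attack.
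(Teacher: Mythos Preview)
Your approach is essentially the same as the paper's: both identify the set of ``repeated'' formulas $P$ (called $\mathsf{REP}$ in the paper), introduce fresh copies of them, redirect the conclusions of the offending premiseless rules to these copies, add variant rules whose bodies may use the copies, extend the contrary and naming functions so that a copy behaves exactly like its original, and then define a structure-preserving bijection on arguments that is the identity on premises and sends $\Rrightarrow\varphi$ to $\Rrightarrow\varphi^{\dagger}$ when $\varphi\in P$. Your version is slightly more careful in one respect---you add a variant rule for \emph{every} subset $Q\subseteq B\cap P$, whereas the paper only keeps the original rule and the single variant where all $P$-elements in the body are replaced simultaneously; your finer set of variants makes the well-definedness of $i$ on compound arguments (and its surjectivity) go through without the implicit assumption that all $P$-conclusions among the immediate subarguments come from premiseless rules.
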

\newcommand{\pisaf}{\premprefix\nisaf}
\newcommand{\pisafp}{\premprefix\nisaf}

\begin{proof}
\newcommand{\repfun}{\mathsf{REP}}
\newcommand{\ksaf}{\ang{\lanset,\contfun,\rulset,\namefun,\kb^\ast,\preceq^{\ast}}}
\newcommand{\ksafp}{\ang{\lanset',\contfun',\rulset',\namefun',\kb^\ast,\preceq^{\ast \prime}}}

Let $\premprefix\nisaf=\unravelsaf$ with $\kb=\kb_n^F\cup \kb^?_n\cup \kb_p^F\cup\kb_p^?$, its set of repetitions is defined as $\mathsf{REP}=\kb\cap\{\varphi \in \lanset\mid\ang{\{\},\varphi}\in \rulset\}$, that is, the set of all formulas that work both as premises and as heads of premiseless rules.\footnote{Hence $\premprefix\nisaf$ is tidy iff $\mathsf{REP}=\varnothing$.} We will use a fresh syntactic copy of $\lanset$ to get rid of repetitions, that is, a set $c(\lanset)=\{\varphi'\mid \varphi \in \lanset\}$ with $\lanset\cap c (\lanset)=\varnothing$. We further assume that $(\cdot)':\lanset \to c(\lanset)$ is injective. This map can be obtained, in the case of propositional languages, by using fresh copies of variables and Boolean constants, and providing a straightforward recursive translation. Hence, for instance, we would have that $\big(p \to (q \leftrightarrow (p\land \lnot s))\big)'=p'\to (q' \leftrightarrow (p'\land \lnot s')) $. The details of defining $(\cdot)'$ for each language are left unspecified. \\
We further define the following translation function $\tau$:
\begin{itemize}
     \item For formulas:         $
    \tau_F(\psi)= \left\{\begin{array}{lr}
   \psi'  \text{\quad if } \psi \in \repfun;
        & \\
        \psi \text{ \quad otherwise.}\\
        \end{array}\right.
$
            \item For premiseless rules: $\tau_R(\ang{\{\},\psi})=\ang{\{\},\tau_F(\psi)}$.

        \item For the rest of rules: $\tau_R(\ang{\{\varphi_1,...,\varphi_n\}, \varphi}=\ang{\{\tau_F(\varphi_1),...,\tau_F(\varphi_n)\}, \varphi}$. 

        \item For atomic arguments: $\tau_A(\varphi)=\varphi$.
        
       \item For simple premiseless arguments: $\tau_A(\Rrightarrow \psi)=\Rrightarrow \tau_F(\psi)$.
       
\item For the rest of arguments: $\tau_A(A_1,...,A_n \Rrightarrow \varphi)=\tau_A(A_1),...,\tau_A(A_n)\Rrightarrow \varphi$.
    \end{itemize}

    We drop the subscript from $\tau$ whenever the context is clear. \par \smallskip
Using $\tau$, the equivalent tidy prem-ISAF we are after can be defined as $\premprefix\nisaf'=\ang{\lanset',\contfun',\rulset',\namefun,'\kb',\preceq'}$ where:

\begin{itemize}
    \item $\lanset'=\lanset\cup c(\lanset)$.

    \item $\contfun'=\contfun\cup \{\ang{\varphi,\psi'},\ang{\varphi',\psi},\ang{\varphi',\psi'}\mid \varphi \in \overline{\psi}\}$.

    \item $\rulset'=(\rulset \setminus \{\Rrightarrow\varphi \mid \varphi \in \repfun\})\cup \{\tau(r)\mid r \in \rulset\}$.

    \item $\namefun'=\namefun\cup\{\ang{\tau(r),\varphi}\mid \ang{r,\varphi}\in \namefun\}$.

    \item $\kb'=\kb$.

\item $\preceq'=(\preceq\cup\{\ang{A,\tau(B)},\ang{\tau(A),B},\ang{\tau(A),\tau(B)}\mid A\preceq B\})_{\restrict\args(\lanset',\rulset',\kb')}$.

\end{itemize}

We now show that $\tau_A$ works as an equivalence function between the completions of both prem-ISAFs (see Definition \ref{def:expressivity}). We split this into several claims:
\par \medskip
    
\noindent [\textbf{Claim 1}. For every $X\in \args(\premprefix\nisaf), \tau_A(X)\in \args(\premprefix\nisaf')$]\footnote{It is easy to check that $\bigcup_{\ang{\args^\ast,\defrel^\ast}\in \compfun(\premprefix\nisaf)}\args=\args(\premprefix\nisaf)$, and $\bigcup_{\ang{\args^\ast,\defrel^\ast}\in \compfun(\premprefix\nisaf')}\args=\args(\premprefix\nisaf')$.}
\par \smallskip

Let $X\in \args(\premprefix\nisaf)$, we continue by induction on $X$. \\

\noindent[Case: $X=\varphi\in \kb$] We have that $\tau_A(X)=\varphi$ by definition of $\tau_A$. It then holds that $\tau_A(\varphi) \in \kb$ by hypothesis and, since $\kb=\kb'$ by definition of $\premprefix\nisaf'$, we obtain $\tau_A(X)\in \args(\premprefix\nisaf')$ by definition of argument.  
\par \smallskip
\noindent[Case: $X=\Rrightarrow \varphi$] We distinguish two subcases. First, if $\varphi\notin\repfun$, $\tau(X)=\Rrightarrow\varphi$, but since $\ang{\{\},\varphi} \in \rulset'$ by definition of $\rulset'$, we have that $\tau_A(X)\in \args(\premprefix\nisaf')$. Second, if $\varphi\in\repfun$, $\tau(X)=\,\Rrightarrow\varphi'$. We have that $\tau_R(\ang{\{\},\varphi})=\ang{\{\},\varphi'}$, and we have that $\ang{\{\},\varphi'}\in \rulset' $ by definition of $\rulset'$. By definition of argument, we obtain $\tau_A(X)\in \args(\premprefix\nisaf')$. \par \smallskip
\noindent [Case: $X=X_1,...,X_n\Rrightarrow\varphi$] Assume, as the inductive hypothesis, that 

\begin{center}
    $\forall X_1,...,X_n \in \args(\premprefix\nisaf)$, $\tau(X_1),...,\tau(X_n)\in \args(\premprefix\nisaf')$.
\end{center}

We distinguish two subcases. First, suppose that $\ang{\{\conc(\tau(X_1)),...,\conc(\tau(X_n))\},\varphi}\in \rulset$, this implies that $\ftoprule(X)\in \rulset'$ (because all rules with premises belonging to $\rulset$ belong also to $\rulset'$ by definition), and, using the inductive hypothesis and the definition of argument, we arrive to $\tau_A(X)\in \args(\premprefix\nisaf')$.  Second, suppose that $\ang{\{\conc(\tau(X_1)),...,\conc(\tau(X_n))\},\varphi}\notin \rulset$. This implies that there are $k$ arguments, $\{Z_1,...,Z_k\} \subseteq\{X_1,...,X_n\}$ having the form $\Rrightarrow \varphi'$ 
for some $\varphi' \in c(\lanset)$. So we have $\ftoprule(X)=\ang{\{\varphi_1',...,\varphi_k',\varphi_{k+1},...,\varphi_n\},\varphi}$ with $\varphi_1, ...,\varphi_k\in \repfun$ and $\varphi_{k+1},...,\varphi_n \notin \repfun$. This implies, by definition of $\tau_R$ and $\tau_F$, that $\ftoprule(X)\in \rulset'$. The latter implies, together with inductive hypothesis and the definition of argument, that $\tau_A(X)\in \args(\premprefix\nisaf')$.
\par \medskip

\noindent [\textbf{Claim 2}. $\tau_A$ is injective] \par \smallskip

Let $X,Y \in \args(\pisaf)$ and $X\neq Y$. We want to show that $\tau_A(X)\neq \tau_A(Y)$. We proceed by induction on $X$ and $Y$. When $X$ and $Y$ have different syntactic shapes, e.g., when $X=\Rrightarrow\varphi$ and $Y=Y_1,...,Y_n\Rrightarrow\varphi$, the proof is straightforward, as $\tau_A$ clearly preserves these shapes and hence $\tau_A(X)\neq\tau_A(Y)$. We analyse the rest of the cases:\par \smallskip
[Case: $X=\varphi,Y=\psi$] We have that $X\neq Y$ by hypothesis, and that $\tau_A(X)=\tau_F(X)$ and $\tau_A(Y)=\tau_F(Y)$ by definition, so it is enough to recall that $\tau_F$ is injective to see that $\tau_A(X)\neq\tau_A(Y)$. \par 

[Case: $X=\Rrightarrow \varphi,Y=\Rrightarrow\psi$] We have that $X\neq Y$ by hypothesis, $\tau_A(X)=\Rrightarrow\tau_F(\varphi)$ and $\tau_A(Y)=\Rrightarrow\tau_F(\psi)$ by definition, so it is enough to recall that $\tau_F$ is injective.\par

[Case: $X=X_1,...,X_n\Rrightarrow \varphi,Y=Y_1,...,Y_k\Rrightarrow\psi$] Assume, as the inductive hypothesis that for all $Z_1,...,Z_{n+k}$, $Z_i\neq Z_j$ implies $\tau_A(Z_i)\neq\tau_A(Z_j)$. Note that $\tau_A(X_1,...,X_n\Rrightarrow \varphi)=\tau_A(X_1),...,\tau_A(X_n)\Rrightarrow \varphi$ and similarly for $Y$. Now, suppose that $X\neq Y$, this leads, by definition of argument, to either $\{X_1,...,X_n\}\neq\{Y_1,...,Y_k\}$ or $\varphi\neq\psi$. For the latter case, we have that $\conc(\tau_A(X))\neq \conc(\tau_A(Y))$, hence $\tau_A(X)\neq\tau_A(Y)$. For the former case, suppose reasoning towards contradiction that $\{\tau_A(X_1),...,\tau_A(X_n)\}=\{\tau_A(Y_1),...,\tau_A(Y_k)\}$. This implies that there are $X_i \in \{\tau_A(X_1),...,\tau_A(X_n)\}$ and $Y_j\in\{\tau_A(Y_1),...,\tau_A(Y_k)\}$ such that $X_i \neq Y_j$ but $\tau_A(X_i)=\tau_A(Y_j)$, but this contradicts the inductive hypothesis.

\par \medskip

\noindent[\textbf{Claim 3}. $\tau_A$ is surjective wrt $\args(\premprefix\nisaf')$] \par \smallskip

Let $X'\in \args(\pisaf')$. We want to show that there is $X\in \args(\pisaf)$ such that $\tau_A(X)=X'$. We do this by induction of $X'$. \par
[Case: $X'=\varphi$] Then $\tau_A(X')=X'$ and we know $X'=\varphi \in \kb$, and therefore, $X'\in \args(\pisaf)$. \par 

[Case: $X'=\, \Rrightarrow \delta$] We distinguish two cases. If $\delta \notin c(\lanset)$, then by definition of $\tau_A$, we have that $\Rrightarrow \delta \in \args(\pisaf)$ and $\tau_A(\Rrightarrow \delta)=\, \Rrightarrow \delta$. If $ \delta \in c(\lanset)$, then by the definition of $\tau_A$ there is $\psi \in \lanset$ such that $\Rrightarrow \psi \in \args(\pisaf)$ and $\tau_A(\Rrightarrow \psi)=\, \Rrightarrow \delta$. \par

[Case: $X'=X_1',...,X_n'\Rrightarrow \varphi$] As the inductive hypothesis we have that there are $X_1,...,X_n \in \args(\pisaf)$ such that $\tau_A(X_1)=X_1',...,\tau_A(X_n)=X_n'$. By definition of $\tau_A$ and the inductive hypothesis we have that $\tau_A(X_1,...,X_n \Rrightarrow \varphi)=X_1',...,X_n'\Rrightarrow\varphi$. Hence we just need to show that $X= X_1,...,X_n \Rrightarrow \varphi \in \args(\pisaf) $, which amounts to showing that $r=\ang{\{\conc(X_1),...,\conc(X_n)\},\varphi} \in \rulset$. Since $X_1',...,X_n'\Rrightarrow \varphi\in \args(\pisaf')$, we have $r'=\ang{\{\conc(X_1'),...,\conc(X_n')\},\varphi}\in \rulset'$. We distinguish two subcases. First, suppose that $r'\in \rulset$, then $r'=r$ by definition of $\rulset'$ and $\tau_R$ (as otherwise we would get that there is a premise of a rule in $\rulset$ that belongs to $c(\lanset)$, which is absurd), and we are done. If $r'\notin \rulset$, then we get by definition of $\rulset'$ that there is $r_1\in \rulset$ such that $\tau_R(r_1)=r'$. We have, by definition of $\tau_R$, that $r_1=\ang{\{\tau^{-1}_F(\conc(X'_1)),...,\tau_F^{-1}(\conc(X_n))\},\varphi}$. One can show, by an easy induction on $Z\in \args(\pisaf)$, that $\tau_A(Z)=Z'$ implies $\conc(Z)=\tau^{-1}_F(\conc(Z'))$. Applying this to $r_1$, we have $r_1=\ang{\{\conc(X_1),...,\conc(X_n)\},\varphi}=r$, and therefore $r \in \rulset$.
\par \medskip

\noindent[\textbf{Claim 4}. $\big \{\ang{\tau(\args^\ast),\tau(\defrel^\ast)}\mid \comp \in \compfun(\premprefix\nisaf)\big \}=\compfun(\premprefix\nisaf')$] \par \smallskip
We will only prove the $\subseteq$-inclusion; the reverse one is completely analogous.
Let $\comp \in \compfun(\pisaf)$, which means that $\exists\kb^\ast .\kb^F\subseteq \kb^\ast\subseteq \kb^F\cup \kb^?$ such that $\naf(\ksaf)=\comp$. Let us show that $\tau(\comp)\in \compfun(\pisaf')$, which amounts to showing that there is a prem-completion of $\pisaf'$ such that its associated AF is $\tau(\comp)$. We shall prove that this prem-completion is precisely $\ang{\lanset',\contfun',\rulset',\namefun',\kb^\ast,\preceq^{\ast \prime}}$, where $\preceq^{\ast\prime}=\preceq'_{\restrict \args(\lanset',\rulset',\kb^\ast)}$. We first prove $\tau(\args(\ksaf)=\args(\ksafp)$ and later $\tau(\defrel(\ksaf))=\defrel(\ksafp)$.\par \smallskip

\noindent [\textbf{Claim 4.1.} $\tau(\args(\ksaf)\subseteq\args(\ksafp)$] Suppose that $X \in \args(\ksaf)$. We continue by induction on $X$.\par

[Case: $X=\varphi$] By definition of $\tau_A$ we have that $\tau_A(X)\in \kb^\ast$, and therefore $\tau_A(X)\in \args(\ksafp)$. \par 

[Case: $X=\Rrightarrow \varphi$] We reason by cases on $\varphi \in \repfun$. If $\varphi \in \repfun$, we have $\tau_A(X)=\Rrightarrow \varphi'$ (by definition of $\tau_A$) and $\ang{\{ \},\varphi'}\in\rulset'$ (by definition of $\pisaf'$). Which implies $\tau_A(X)\in \args(\ksafp)$ (by definition of argument). Analogously, if $ \varphi \notin \repfun$, we have that $\tau_A(X)=X$ and $\ang{\{\},\varphi}\in \rulset'$, and therefore $\tau_A(X)\in \args(\ksafp)$. \par 

[Case: $X=X_1,...,X_n \Rrightarrow \varphi$] As the inductive hypothesis, suppose that $\tau(X_1),\dots,\tau(X_n) \in \args(\ksafp)$. By definition of $\tau_A$ we have that $\tau_A(X)=\tau_A(X_1),\dots,\tau_A(X_n)\Rrightarrow \varphi$. We have that $\ftoprule(X)\in \rulset'$ (because $\tau_A:\args(\pisaf)\to \args(\pisaf')$, as we showed before). From the previous claims, the definition of argument, and the inductive hypothesis, we obtain that $\tau_A(X)\in \args(\ksafp)$. \par \smallskip

\noindent [\textbf{Claim 4.2.} $\args(\ksafp)\subseteq\tau(\args(\ksaf))$] 
Let $X' \in \args(\ksafp)$, we want to show that there is $X \in \args(\ksaf)$ such that $\tau_A(X)=X'$. We have that $\tau_A^{-1}(X') \in \args(\pisaf)$ exists and is unique because $\tau_A$ is bijective, as we have previously shown. Moreover, note that premises of arguments are not changed by $\tau_A$, hence $\prem(X')=\prem(\tau^{-1}(X'))\subseteq \kb^\ast$. From the previous claims and the definition of argument, we can conclude that $\tau^{-1}(X') \in \args(\ksaf)$. 
\par \medskip

\noindent[\textbf{Claim 4.3.} $\tau(\defrel(\ksaf))=\defrel(\ksafp)$] 
For defeats, we will use a couple of propositions. The first one follows easily from the definition of $\preceq'$:

\begin{center}
    (P1) for every $X,Y \in \args(\pisaf)$, $X\preceq Y$ iff $\tau(X)\preceq'\tau(Y)$.
\end{center}

The second one claims that the subargument relation is preserved and antipreserved by $\tau_A$, and this can be proved by induction on $X$:

\begin{center}
    (P2) for every $X,Y \in \args(\pisaf)$, $X\in \sub(Y)$ iff $\tau_A(X)\in \sub(\tau_A(Y))$.
\end{center}

\noindent [\textbf{Claim 4.3.1.} $\tau(\defrel(\ksaf))\subseteq\defrel(\ksafp)$] Suppose that $\ang{X,Y}\in \defrel(\ksaf)$. We know that $\tau_A(X),\tau_A(Y)\in \args(\ksafp)$ by the previous items. We want to show that $\ang{\tau_A(X),\tau_A(Y)}\in \defrel(\ksafp)$. We do this by case reasoning on the definition of defeat.\par 
[Case: $X$ undercuts $Y$ (on $Z\in \sub(Y)$)] By definition of undercutting we have that:

\begin{center}
(1) $\conc(X)\in \overline{\namefun(\ftoprule(Z))}$.
\end{center}

Moreover, note that, by definition of $\namefun'$, we have.
\begin{center}
(2) $\namefun'(\ftoprule(\tau(Z)))=\namefun(\ftoprule(Z))$.
\end{center}

Applying the identity of (2) to (1), we obtain:
\begin{center}

(3) $\conc(X)\in \overline{\namefun'(\ftoprule(\tau(Z)))}$.
\end{center}

We continue by cases. First, if $\conc(\tau(X))=\conc(X)$, then we substitute identicals in (3) and apply the definition of $\overline{\cdot}'$ to arrive at $\conc(\tau(X))\in \overline{\namefun'(\ftoprule(\tau(Z))}'$ which amounts to $\tau(X)$ undercutting $\tau(Y)$ (on $\tau(Z)$) with respect to $\ksafp$. Second, if $\conc(\tau(X))\neq \conc(X)$, this means that $X=\Rrightarrow\varphi$ with $\varphi \in \repfun$ and $\conc(\tau(X))=\varphi'$. This implies, together with (3) and the definition of $\overline{\cdot}'$, that $\conc(\tau(X))\in \overline{\namefun'(\ftoprule(\tau(Z)))}'$, and we are done.

\par \smallskip
\noindent [Case: $X$ undermines $Y$ (on $\varphi\in \prem(Y)$) and $X\not \prec^*\varphi$] By definition of undermining we have that: 
\begin{center}
    
(H) $\conc(X)\in \overline{\varphi}$ with $\varphi \in \prem(Y)$ and $X\not \prec^*\varphi$. 

\end{center}
Note that $\varphi \in \prem(\tau_A(Y))$ (because $\tau_A$ does not change premises by definition). So all we need to show is that (1) $\conc(\tau_X(X))\in \overline{\varphi}'$ and (2) $\tau_A(X)\not \prec^{\ast \prime} \varphi$. (2) follows from (P1), (H) and the definition of $\preceq^{\ast \prime}$. As for (1), we consider two cases. If $\conc(X)=\conc(\tau_A(X))$, we are done (because of (H) and the definition of $\overline{\cdot}'$). If $\conc(X)\neq\conc(\tau_A(X))$, then by definition of $\tau_A$, $X=\, \Rrightarrow\delta$ for some $\delta \in \repfun $. Which implies by definition of $\tau$, that $ \conc(\tau_A(X))=\tau_F(\conc(X))$. This implies by definition of $\overline{\cdot}'$ that $\conc(\tau_A(X))\in \overline{\varphi}'$.

\par \smallskip

 \noindent [Case: $X$ contrary-undermines $Y$ (on $\varphi\in \prem(Y)$)] By definition of contrary-undermining, we have that

\begin{center}
(H) $\conc(X)\in \overline{\varphi}$ with $\varphi \in \prem(Y)$ and $\varphi \notin \overline{\conc(X)}$. 
\end{center}

Arriving at $\conc(\tau(X))\in \overline{\varphi}'$ and $\varphi \in \prem(\tau(Y))$ is as in the previous case. So we just need to show that $\varphi \notin \overline{(\conc(\tau(X))}'$. This can be done again by reasoning over cases on $\conc(X)=\conc(\tau(X))$ and applying the definition of $\overline{\cdot}'$, (H) and the definition of $\tau$.

\black
\par \smallskip
\noindent [Case: $X$ rebuts $Y$ (on $Z$) and $X\not \prec^{\ast}Z$] By definition of rebuttal, we have the following:
\begin{center}
    
(H) $\conc(X)\in \overline{\conc(Z)}$ with $Z \in \sub(Y)$ and $\ftoprule(Z)\in\rulset_d$, and $X\not \prec^{\ast}Z$. 

\end{center}

We want to show that (1) $\conc(\tau(X))\in\overline{\conc(\tau(Z))}'$ with $\tau(Z) \in \sub(\tau(Y))$ and $\ftoprule(\tau(Z))\in\rulset'_d$ and (2) $\tau(X)\not \prec^{\ast \prime}$. (2) follows again from (P1), (H) and the definition of $\preceq^{\ast \prime}$. For (1), note that both the subargument relation and the type of top rule are preserved by $\tau_A$, so it is only necessary to check that $\conc(\tau(X))\in\overline{\conc(\tau(Z))}'$. This can be done reasoning by cases on whether $\conc(X)=\conc(\tau(X))$ and $\conc(Z)=\conc(\tau(Z))$ and applying the definition of $\overline{\cdot}'$. The details are omitted.

\par \smallskip
 \noindent [Case: $X$ contrary-rebuts $Y$ (on $Z$)] 

Arriving at $\conc(\tau(X))\in \overline{\conc(\tau(Z))}'$, $\tau(Z)\in\sub(\tau(Y))$ and $\ftoprule(\tau(Z))\in \rulset_d'$ is as in the previous case. So, to show that $\tau(X)$ contrary-rebuts $\tau(Y)$ (on $\tau(Z)$), we just need to show that $\conc(\tau(Z))\notin \overline{\conc(\tau(X))}'$. This is done by cases on $\conc(X)=\conc(\tau(X))$ and $\conc(Z)=\conc(\tau(Z))$. In all four cases, we need to use the case hypothesis $\conc(Z)\notin \overline{\conc(X)}$ and the definition of $\overline{\cdot}'$.
\black

\par \medskip

\noindent [\textbf{Claim 4.3.2.} $\defrel(\ksafp)\subseteq \tau(\defrel(\ksaf))$]  Suppose that $\ang{X',Y'}\in \defrel(\ksafp)$. We know, by the previous items we have proved, that there are unique $X,Y\in \args(\ksaf)$ such that $\ang{X,Y}\in \defrel(\ksaf)$ and $\tau(X)=X'$ and $\tau(Y)=Y'$. We just need to show that $\ang{X,Y}\in \defrel(\ksaf)$. We continue by cases on the definition of defeat:
\par \noindent
[Case: $X'$ undercuts $Y'$ on $Z'$] Recall that the existence and uniqueness of $\tau^{-1}(Z')=Z$ is also guaranteed by previous items of this theorem. Applying the definition of undercut to the case hypothesis, we have the following.
\begin{center}
    (H) \qquad$\conc(X')\in \overline{\namefun'(\ftoprule(Z'))}'$ with $Z'\in \sub(Y')$.
\end{center}

We already know that $Z \in \sub(Y)$ (by (P2)). We continue by cases on $\conc(X')=\conc(X)$ and $Z=Z'$. Let us just explain the first case. If both $\conc(X')=\conc(X)$ and $Z=Z'$, we substitute identicals in (H) and obtain  $\conc(X)\in \overline{\namefun'(\ftoprule(Z))}'$. Using (H) again, and applying the definitions of $\namefun'$ and $\overline{\cdot}'$, we get to $\conc(X)\in \overline{\namefun(Z)}$, which together with $Z \in \sub(Y)$ means that $X$ undercuts $Y$. For cases in which $Z\neq Z'$ or $\conc(X')\neq\conc(X)$ we need to check that the definitions of $\namefun'$ and $\overline{\cdot}'$ force $X$ to undercut $Y$ in $Z$. We omit details.
\par \smallskip
[Case: $X'$ undermines $Y'$ on $\varphi$ and $X' \not \prec^{\ast\prime} \varphi$] Applying the definition of undermining, we have that $\conc(X')\in \overline{\varphi}'$ with $\varphi\in \prem(Y')$ and $X' \not \prec^{\ast\prime} \varphi$. $X' \not \prec^{\ast} \varphi$ is guaranteed by (P1). So we just need to show that $\conc(X)\in \overline{\varphi}$. If $\conc(X)=\conc(X')$, we apply the definition of $\overline{\cdot}'$ and we are done. If $\conc(X)\neq\conc(X')$, then $\conc(X)=\tau^{-1}_F(\conc(X))$ (we know that $X$ is a simple premiseless argument) and the definition of $\overline{\cdot}'$ guarantees again that $\conc(X)\in \overline{\varphi}$.
\par \smallskip

Contrary-underminings and (contrary)-rebuttals follow the same reasoning patterns as their correspondent cases in the proof of Claim 4.3.1. We omit further details.
\black
\end{proof}
\begin{theorem}\label{thm:rule-more-prem}
    $\text{prem-ISAFs } \lexpressive \text{ rul-ISAFs.}$ 
\end{theorem}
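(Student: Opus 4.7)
The plan is to exploit Lemma \ref{lemma:tidyasexpressive} and the transitivity of $\lexpressive$ (Fact \ref{fact:lespressive-is-a-preorder}): it suffices to show that every tidy prem-ISAF admits an equivalent rul-ISAF. So fix a t-prem-ISAF $\premprefix\nisaf=\unravelsaf$ with $\kb=\kb_a^\fix\cup\kb_a^?\cup\kb_p^\fix\cup\kb_p^?$, and I will construct a rul-ISAF $\rulprefix\nisaf'=\ang{\lanset,\negfun,\rulset',\namefun,\kb',\preceq'}$ that replaces every uncertain premise by an equivalent premiseless rule. Concretely, set $\kb'_n=\kb_a^\fix$, $\kb'_p=\kb_p^\fix$, and
\[
\rulset'^{\fix}_s=\rulset_s,\quad \rulset'^{?}_s=\{\ang{\{\},\varphi}\mid \varphi\in \kb_a^?\},\quad \rulset'^{\fix}_d=\rulset_d,\quad \rulset'^{?}_d=\{\ang{\{\},\varphi}\mid \varphi\in \kb_p^?\}.
\]
The naming function $\namefun$ is kept unchanged (so no new premiseless rule gets a name), and the preference relation $\preceq'$ is transported along the argument bijection $i$ defined below.

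The bijection $i:\args(\nsaf^{max})\to \args(\rulprefix\nisaf'^{max})$ is defined by recursion on the structure of arguments: $i(\varphi)=\varphi$ if $\varphi\in \kb^\fix$, $i(\varphi)=\, \sto \varphi$ if $\varphi\in \kb_a^?$, $i(\varphi)=\,\Rightarrow \varphi$ if $\varphi\in \kb_p^?$, and $i(A_1,\ldots,A_n\Rrightarrow \varphi)=i(A_1),\ldots,i(A_n)\Rrightarrow \varphi$. Tidiness of $\premprefix\nisaf$ is used here: since no $\varphi\in \kb$ is the conclusion of a premiseless rule in $\rulset$, the two premiseless-rule ``slots'' for every $\varphi\in\kb^?$ are genuinely new and cannot clash with existing arguments. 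With this in hand, $\preceq'$ is taken to be $\{\ang{i(A),i(B)}\mid A\preceq B\}$ restricted as in Definition \ref{def:rul-isaf-completions} on each completion.

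Next I would establish a bijection between premise-completions of $\premprefix\nisaf$ and rule-completions of $\rulprefix\nisaf'$: for each $K\subseteq \kb^?$, the premise-completion using $\kb^\fix\cup K$ corresponds to the rule-completion using $\rulset\cup \{\ang{\{\},\varphi}\mid \varphi\in K\}$. This is immediate because $\rulset'^?$ is in bijection with $\kb^?$ by construction. For each matched pair I would then verify, by induction on the structure of arguments, that $i$ restricts to a bijection between the two sets of arguments of the two completions, along the same lines as Claims 1--3 in the proof of Lemma \ref{lemma:tidyasexpressive}.

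The main obstacle is to show that $i$ preserves the defeat relation, since this is where the ASPIC$^+$ attack types interact most subtly with the premise-to-rule translation. The key case analysis is: (i) undermining of some $\varphi\in \kb_p^\fix$ transports to undermining in $\rulprefix\nisaf'$, as $\varphi$ still lives in $\kb'_p$; (ii) undermining of some $\varphi\in \kb_p^?$ becomes a rebuttal on $\Rightarrow\varphi$ in the rul-ISAF, which is well-formed because the top rule of $\Rightarrow\varphi$ is defeasible, and the preference side-condition matches because $\preceq'$ was defined by transport along $i$; (iii) arguments obtained from $\varphi\in\kb_a^?$ become $\sto\varphi$, whose top rule is strict and unnamed, so they are unattackable, mirroring the fact that axioms cannot be undermined; (iv) rebuttals and undercuts inside more complex arguments are preserved because $i$ preserves conclusions, subargument structure, the type of the top rule, and the value of $\namefun$. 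Once these cases are discharged, combining the bijection on completions with the argument- and defeat-preservation of $i$ yields $\compfun(\premprefix\nisaf)\approxeq \compfun(\rulprefix\nisaf')$, which is exactly what Definition \ref{def:expressivity} demands.
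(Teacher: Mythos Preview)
Your proposal is correct and follows essentially the same approach as the paper: reduce to tidy prem-ISAFs via Lemma~\ref{lemma:tidyasexpressive}, replace each uncertain premise by a fresh premiseless rule (strict for axioms, defeasible for ordinary premises), transport the preference along the induced argument bijection, and verify defeat-preservation by the same case analysis, with the key observation that undermining on an uncertain ordinary premise becomes a rebuttal on the new defeasible premiseless rule. Your identification of where tidiness is needed (injectivity of $i$ at the atomic/premiseless-argument interface) also matches the paper's account.
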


\newcommand{\risaf}{\rulprefix\nisaf}
\newcommand{\unravelrisaf}{\ang{\lanset,\overline{\cdot},\rulset',\namefun',\kb',\preceq'}}
\begin{proof}[Proof (sketched)]
    Let $\pisaf=\unravelsaf$ with $\rulset=\rulset_s\cup\rulset_d$ and $\kb=\kb_n^F\cup\kb_n^?\cup\kb_p^F\cup\kb_p^?$. We can assume without loss of generality, because of Lemma \ref{lemma:tidyasexpressive}, that $\pisaf$ is tidy (if it weren't, we would just transform it into an equivalent, tidy prem-ISAF).
    
     Let us inductively define the following translation function for arguments in $\args(\pisaf)$:

    \begin{itemize}
        \item $\tau(\varphi)= \left\{\begin{array}{lr}
   \sto\varphi  &\text{if } \varphi \in \kb^?_n
         \\
        \Rightarrow \varphi &\text{if } \varphi\in \kb^?_p \\
        \varphi & \text{otherwise}
        \end{array}\right.$
\item $\tau(\Rrightarrow\varphi)=\, \Rrightarrow\varphi$ for $\Rrightarrow \in \{\sto,\Rightarrow\}$.
        \item $\tau(X_1,...,X_n\Rrightarrow\varphi)=\tau(X_1),...,\tau(X_n)\Rrightarrow\varphi$.
    \end{itemize}
    
    Now, we define an equivalent rul-ISAF as follows, $\rulprefix\nisaf=\ang{\lanset,\overline{\cdot},\rulset',\namefun,\kb',\preceq'}$, where:

    \begin{itemize}
        \item $\rulset'=\rulset_s^{\fix }\cup\rulset_s^{?}\cup \rulset_d^{\fix}\cup\rulset_d^{?}$, where:

        \begin{itemize}
            \item $\rulset_s^{\fix }=\rulset_s$.
            \item  $\rulset_s^{?}=\{\ang{ \{\},\varphi}\mid \varphi \in \kb_n^?\}$.
        \item $\rulset_d^{\fix }=\rulset_d$.
            \item  $\rulset_d^{?}=\{\ang{ \{\},\varphi}\mid \varphi \in \kb_p^?\}$.
        \end{itemize}

        \item $\kb'=\kb\setminus\kb^?$.
    \item $\preceq'= \{\ang{\tau(X),\tau(Y)}\mid X \preceq Y\}$.
    
    \end{itemize}

    The proof follows the same structure as that of Lemma \ref{lemma:tidyasexpressive}. We sketch its structure (the main claims to be proved) and point out the critical points.\par \medskip

    \noindent [\textbf{Claim:} For all $X \in \args(\pisaf)$, $\tau(X)\in\args(\risaf)$] This is proved by induction on $X$. \par \smallskip 

    \noindent [\textbf{Claim:} $\tau$ is injective, that is, for all $X,Y\in \args(\pisaf)$, $X\neq Y$ implies $\tau(X)\neq\tau(Y)$] This is done by simultaneous induction on $X$ and $Y$. We have nine cases on the possible syntactic shapes of $X$ and $Y$ (i.e., $X=\varphi$ and $Y=\psi; X=\, \Rrightarrow \varphi$ and $Y=\psi$, etc.). The inductive hypothesis is only needed for the case where $X=X_1,...,X_n\Rrightarrow \varphi$ and $Y=Y_1,...,Y_k \Rightarrow \psi$. Importantly, the case $X=\varphi$ and $Y=\,\Rrightarrow \psi$ (and the symmetric one, $X=\Rrightarrow \varphi$ and $Y=\psi$) needs the assumption that $\pisaf$ is tidy, which we justified at the beginning of this proof. \par \smallskip

    \noindent [\textbf{Claim:} $\tau$ is surjective (wrt $\args(\risaf)$] By induction on $X'\in \args(\risaf)$. \par \smallskip
\par \medskip
    The next step is showing that $\pisaf$ and $\risaf$ are indeed equivalent. That is, we need to show that $\{\tau(\comp)\mid \comp\in\compfun(\pisaf)\}=\compfun(\risaf)$. For this, we establish a one-to-one correspondence between the prem-completions (of $\pisaf$) and the rul-completions (of $\risaf$). Let $\kb^\ast=\kb^\fix \cup \kb^{\ast ?}_n\cup \kb^{\ast ?}_p$ be the set of premises of a prem-completion, its corresponding set of rules in the rul-completion of $\risaf$ is $\rulset^\ast=\rulset^\fix\cup\rulset^{\ast ?}_s\cup \rulset^{\ast ?}_d$ where:

    \begin{itemize}
        \item $\rulset^{\ast ?}_s=\{\ang{\{\},\varphi}\mid \varphi \in \kb^\ast_n\}$; and
        \item $\rulset^{\ast ?}_d=\{\ang{\{\},\varphi}\mid \varphi \in \kb^\ast_p\}$.
    \end{itemize}

\newcommand{\precomp}{\ang{\lanset,\contfun,\rulset,\namefun,\kb^\ast,\preceq^\ast}}
\newcommand{\rulcomp}{\ang{\lanset,\contfun,\rulset^\ast,\namefun,\kb',\preceq'}}
The definition from $\rulset^\ast$ to $\kb^\ast$ is analogous. Finally, we prove that for every prem-completion $\ang{\lanset,\contfun,\rulset,\namefun,\kb^\ast,\preceq}$, $\tau(\naf(\ang{\lanset,\contfun,\rulset,\namefun,\kb^\ast,\preceq})=\naf(\ang{\lanset,\contfun,\rulset^\ast,\namefun,\kb',\preceq'})=$. This is split into four claims. \par \smallskip
\noindent [\textbf{Claim:} $\tau(\args(\precomp))\subseteq\args(\rulcomp)$] By induction on $X \in \args(\precomp)$. \par \smallskip

\noindent [\textbf{Claim:} $\args(\rulcomp)\subseteq\tau(\args(\precomp))$] By induction on $X'\in \args(\rulcomp)$. \par 
\smallskip

\noindent [\textbf{Claim:} $\tau(\defrel(\precomp))\subseteq\defrel(\rulcomp)$] By cases on the definition of defeat ($X$ defeats $Y$ on $Z$). In the cases where $X$ (contrary-)undermines $Y$ (on $\varphi$) with respect to $\defrel(\precomp)$, we distinguish two subcases: $\tau(\varphi)\in \prem(\tau(Y)) $, which leads to $\tau(X)$ (contrary-)undermines $\tau(Y)$; and $\tau(\varphi)\notin \prem(\tau(Y))$, which leads to $\tau(\varphi)=\Rightarrow \varphi \in \sub(\tau(Y))$ and, subsequently, to a (contrary-)rebuttal.

\par \smallskip
\noindent [\textbf{Claim:} $\defrel(\rulcomp)\subseteq\defrel(\args^\ast(\precomp))$] By cases on the definition of defeat. The critical case here is (contrary-)rebuttal, where we analyse two subcases again: whether the targeted defeasible rule is certain or uncertain. In the former, we obtain that $\tau^{-1}(X)$ also (contrary-)rebuts $\tau^{-1}(Y)$. In the latter,  we find that $\tau^{-1}(X)$ (contrary-)undermines $\tau^{-1}(Y)$. \par 

\end{proof}

The following diagram summarises what we have proved about the relative expressivity of the involved formalisms. A normal arrow from a formalism $\cal X$ to $\cal Y$ expresses that $\mathcal{X} \lexpressive \mathcal{Y}$. Reflexive and transitive arrows are omitted for readability. 
Note that non-depicted arrows represent directions that have been shown not to hold (or that are easily provable).

\begin{center}
    \includegraphics[scale=0.9]{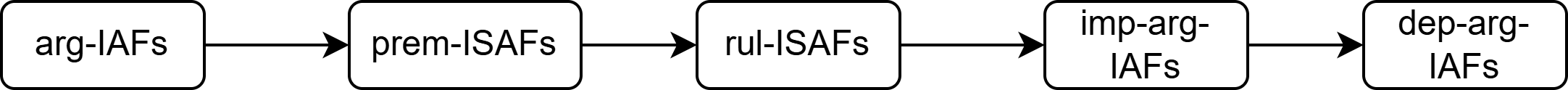}
    \captionof{figure}{Expressivity results for rul-ISAFs and prem-ISAFs}\label{fig:arg-ISAFs-results}
\end{center}

\begin{remark}
   It is not difficult to show that 
  neither disjunctive arg-IAFs (only allowing $\opor$-dependencies) nor nand-arg-IAFs (only allowing $\opnand$-dependencies) are expressive enough to capture prem-ISAFs or rul-ISAFs. 
\end{remark}

\begin{remark}
    Note that not all theorems proved in this section are independent of each other. For instance, it would be enough to prove the depicted arrows and the absence of their reversal in Figure \ref{fig:arg-ISAFs-results}, and the rest of the results (the non-depicted arrows) follow from transitivity of $\lexpressive$. However, we find the proofs of each of these results interesting due to the constructions they are based on. 
\end{remark}
\black

\section{Conclusion}\label{sec:conclusion}

In this research note, we presented two contributions: i) a novel notion of expressivity that enables comparison between abstract and structured formalisms for arguing with uncertainty; and ii) the application of such a notion to the comparison of several formalisms, argument-incomplete argumentation frameworks and their extension with dependencies, on the abstract side, and ASPIC$^+$ frameworks with uncertain rules and premisses, on the structured side. The main results are summarised in Figure \ref{fig:arg-ISAFs-results}. \par 
This note unifies recent work on arguing with uncertain structured formalisms (mainly, \cite{ai32023} and \cite{odekerken2023argumentative,odekerken2025argumentative}). Our short-term plan is to extend the current document to a journal paper by putting these results into their proper broader context, and by studying whether they can be easily applied to other structured formalisms beyond ASPIC$^+$.

\vskip 0.2in
\bibliography{argpi_biblio}
\bibliographystyle{plain}

\end{document}